\documentclass[onefignum,onetabnum]{siamart220329}

\usepackage{graphicx,algorithmic}
\usepackage{amssymb}
\usepackage{booktabs}
\usepackage{multirow}
\usepackage{bm}
\usepackage{dsfont}
\usepackage{mathrsfs}
\usepackage{grffile}
\usepackage{url}
\usepackage{nicefrac}
\usepackage{enumitem}
\usepackage{pifont}
\usepackage{tikz}
\usetikzlibrary{arrows,arrows.meta,positioning}
\usepackage[caption=false]{subfig}
\usepackage{caption}

\newcommand{\TheTitle}{The Effective Number of Nonzeros}
\newcommand{\TheAuthors}{H. He, H. Wang, J. Wang, and H. Zeng}
\headers{\TheTitle}{\TheAuthors}

\title{The Effective Number of Nonzeros: Theory and Regularization for Sparse Recovery%
\thanks{Submitted to the journal's Numerical Algorithms for Scientific Computing section. This work was supported by the General Program of the National Natural Science Foundation of China (NSFC) under No. 12571326.}}

\author{Haoyu He\thanks{School of Information Science and Technology, ShanghaiTech University, Shanghai, China
(\email{hehy12024@shanghaitech.edu.cn}).}
\and Hao Wang\thanks{School of Information Science and Technology, ShanghaiTech University, Shanghai, China
(\email{haw309@gmail.com}). Corresponding author.}
\and Jiashan Wang\thanks{Department of Mathematics, University of Washington, Seattle, WA, USA
(\email{jsw1119@gmail.com}).}
\and Qiankun Shi\thanks{School of Computer Science and Engineering, Sun Yat-sen University
(\email{shiqk@mail2.sysu.edu.cn}).}}

\ifpdf
\DeclareGraphicsExtensions{.pdf,.png,.jpg}
\fi

\numberwithin{equation}{section}
\newsiamthm{Def}{Definition}
\newsiamthm{prop}{Proposition}
\newsiamthm{ASM}{Assumption}
\newsiamremark{remark}{Remark}

\def\bb{{\bm b}}

\def\be{{\bm e}}

\def\bh{{\bm h}}

\def\bp{{\bm p}}
\def\bq{{\bm q}}

\def\bu{{\bm u}}
\def\bv{{\bm v}}

\def\bx{{\bm x}}
\def\by{{\bm y}}

\newcommand{\bpi}{\bm{\pi}}
\newcommand{\Hcal}{\mathcal{H}}
\newcommand{\Supp}{\mathrm{Supp}}

\newcommand{\enz}{\mathrm{ENZ}}

\def\Rcal{{\mathcal R}}

\begin{document}
\maketitle

\begin{abstract}
Classical sparse recovery treats all nonzero entries equally, though numerical noise often creates long tails of negligible coefficients. This paper develops an entropy-based notion of effective sparsity to measure the coefficients carrying significant mass. The central quantity, the effective number of nonzeros (ENZ), is obtained by exponentiating the Shannon entropy of the normalized magnitude distribution. We show that ENZ decomposes exactly into the support cardinality multiplied by a distributional efficiency factor, thereby making precise its relation to the $\ell_0$ count and explaining how it discounts uninformative coefficients. Furthermore, the Shannon ENZ is embedded into a parallel Rényi family that recovers several scale-invariant sparsity measures, including the $\ell_1/\ell_2$ ratio, as special cases. We then prove a stability result under a restricted isometry condition, establishing an explicit bound that depends on the tail energy, measurement perturbation, and restricted isometry constant. For computation, a separable unnormalized entropy surrogate is introduced to avoid global coupling. Numerical experiments on sparse signal recovery and gradient-domain image denoising demonstrate that the resulting regularizer is robust, computationally efficient, and competitive with standard sparsity penalties.
\end{abstract}

\begin{keywords}
effective sparsity, effective number of nonzeros, normalized entropy, sparse recovery, restricted isometry property, inverse problems
\end{keywords}

\begin{MSCcodes}
65F22, 65K10, 90C26, 94A12
\end{MSCcodes}

\section{Introduction} \label{sec:intro}

Recovering sparse signals from underdetermined linear systems is a cornerstone problem in machine learning, high-dimensional statistics, and signal processing. Consider 
\begin{equation}\label{eq:linear_system}
\bb   = A \bx + \boldsymbol{\varepsilon},
\end{equation}
where $A  \in \mathbb{R}^{m \times n}$ is the sensing matrix with $m \ll n$, $\bb \in \mathbb{R}^m$ represents the observed data, and $\boldsymbol{\varepsilon} \in \mathbb{R}^m$ denotes additive noise. Under the classical paradigm, the true signal $\bx$ is assumed to be strictly sparse, and the objective is to reconstruct $\bx$ by minimizing the combinatorial $\ell_0$ cardinality count \cite{blumensath2008iterative, candes2006stable, chen2001atomic, donoho2006compressed, fan2001variable, tibshirani1996regression}:
\begin{equation}\label{eq:l0_minimization}
\min_{\bx}\ \|\bx\|_0 \quad \text{subject to}\quad  {A}\bx=\bb.
\end{equation}
Because problem \eqref{eq:l0_minimization} is known to be NP-hard and exhibits extreme sensitivity to noise \cite{natarajan1995sparse}, a vast literature has focused on developing computationally feasible surrogates. These methods range from the convex $\ell_1$ relaxation \cite{beck2009fast,tibshirani1996regression,chen2001atomic,candes2006robust,Candes2008Enhancing} to non-convex penalties such as $\ell_p$ ($0 < p < 1$) \cite{chartrand2008restricted}, $\ell_1 - \ell_2$ \cite{yin2015minimization}, and the scale-invariant $\ell_1/\ell_2$ ratio \cite{rahimi2019scale,xu2021analysis}. While these continuous surrogates have achieved notable empirical successes, their theoretical recovery guarantees almost universally rely on the Restricted Isometry Property (RIP), where stability thresholds remain strictly tied to the literal cardinality of the signal's support \cite{CRMATH,blanchard2011compressed}.

Despite significant progress, a fundamental discrepancy persists between literal $\ell_0$ sparsity and the physical nature of signals encountered in realistic, noisy environments. In practice, solutions to $ {A}\bx=\bb$ inevitably contain a ``long tail" of numerically small but non-zero components, stemming from precision limits, modeling inaccuracies, or ambient noise \cite{lopes2013estimating}. While these components contribute negligibly to the signal's energy or underlying structure, they indiscriminately inflate the discrete $\ell_0$ count. This tension is particularly acute in compressed sensing theory, where stable recovery typically guarantees success under a $2k$-RIP condition provided the vector contains at most $k$ non-zero entries \cite{candes2006stable,foucart2013mathematical}. When infinitesimal noise perturbations are treated as genuine support elements, the apparent sparsity level $k$ inflates dramatically, forcing the RIP to hold at artificially higher orders. This leads to a theoretical paradox: classical theoretical analysis imposes unnecessarily stringent measurement requirements primarily to safeguard irrelevant tail components, yielding overly pessimistic sampling bounds that fail to reflect the true geometric complexity of the meaningful signal. Consequently, rigid cardinality counting alone is an inappropriate notion of sparsity for noisy inverse problems.

The paradigm shift from literal counting to \emph{effective} quantification is not isolated to sparse recovery, but reflects a foundational principle across modern computational mathematics, learning theory, and statistical physics. In statistical learning foundations, for instance, the raw count of system parameters is widely known to be an unreliable proxy for generalization; instead, metrics such as the Vapnik-Chervonenkis (VC) dimension are utilized to capture the system's  {effective} capacity and true complexity \cite{vapnik1995nature}. 
In large-scale inverse problems, early termination strategies implicitly prevent overfitting to noise by actively focusing computational resources on variables corresponding to dominant,  {effective} eigenvalues, rather than wasting iterations on the dense, non-informative tails of the spectrum \cite{engl1996regularization}. Parallel formulations of this information-theoretic softening have long sustained profound lineages in other scientific disciplines, notably via Meucci's  {effective} number of assets in quantitative finance \cite{meucci2007managing} and and Hill numbers in ecological diversity \cite{hill1973diversity}. Synthesizing these insights underscores a common computational principle: when physical systems are plagued by ambient noise, continuous, distribution-based metrics offer a far more stable characterization of true system complexity than rigid binary counts \cite{roy2007effective}.

Motivated by this universal principle, this paper focuses on the effective number of nonzeros (ENZ) as a continuous regularization model for robust sparse recovery. Grounded in an information-theoretic framework \cite{shannon1948mathematical,cover1999elements,jaynes1957information}, the ENZ characterizes the intrinsic complexity of a signal by evaluating its continuous distributional concentration rather than the binary state of its support set. Under this continuous formulation, small-amplitude noise coefficients carry negligible weight in the underlying entropy calculation, meaning that the overall metric naturally suppresses the influence of long-tail perturbations without affecting the true signal profile. By taking ENZ directly as our variational core, we establish a mathematically stable proxy for cardinality that accurately reflects the effective dimensionality of the system while maintaining a smooth and computationally scalable optimization landscape.

\subsection*{Contribution}

In this paper, we study the effective number of nonzeros (ENZ) as a sparsity model for noisy sparse recovery. The main premise is that literal support size and informative support size are different; a recovered signal can be dense in the $\ell_0$ sense but still be governed by a few dominant coefficients. We develop the theory, extensions, and recovery formulations around this distinction. Our main contributions are threefold:

First, we introduce ENZ as a continuous, magnitude-aware measure of sparsity. We prove a mathematical decomposition showing that the Shannon ENZ factors the discrete support size into a product with a continuous distributional efficiency term. This identity explains how entropy discounts small, uninformative coefficients. We also extend this framework to a Rényi family, showing that scale-invariant measures like the $\ell_1/\ell_2$ ratio arise as special cases.

Second, we establish a stability result under a restricted isometry condition for approximately sparse signals. The resulting bound shows that the error on the dominant support depends on the measurement noise and the energy of the negligible tails, rather than the full literal support size.

Third, to avoid the computational issues caused by global coordinate coupling in standard entropy, we introduce a separable, unnormalized entropy surrogate. We then propose a practical double-loop algorithm that uses an iterative re-scaling scheme and a smoothed L-BFGS solver to efficiently compute the solution .

 \subsection{Outline}

The rest of the paper is organized as follows. \Cref{sec.concept} motivates effective sparsity through empirical analysis of image and text data. \Cref{sec:effective_nonzeros} develops the theory of ENZ, including its definition, decomposition, and extension to the Rényi hierarchy. \Cref{sec:stability} proves the restricted-isometry stability result. \Cref{sec:entropy-model} details the computational formulation, the unnormalized surrogates, and the double-loop algorithm. Finally, \Cref{sec:numerical} reports numerical experiments, and \Cref{sec:conclusion} concludes the paper.

\subsection{Notation and Preliminaries}
\label{subsec:notation}

We denote $\mathbb{R}_+ = [0, \infty)$ and $\mathbb{R}^n_+ = \{\bx \in \mathbb{R}^n : x_i \geq 0, \ \forall i\}$. 
The $(n-1)$-dimensional probability simplex is defined as
\[
\Delta^{n-1} = \left\{ \bpi = (\pi_1, \dots, \pi_n) \in \mathbb{R}^n_+ : \sum_{i=1}^n \pi_i = 1 \ \pi_i \ge 0 \right\}.
\]
The $\ell_p$-norm of $\bx = (x_1, \dots, x_n) \in \mathbb{R}^n$ is  defined as $\|\bx\|_p = (\sum_{i=1}^n |x_i|^p)^{1/p}$. 
The support of $\bx$ is $\Supp(\bx) := \{i : x_i \neq 0\}$, and its cardinality is denoted by $\|\bx\|_0 = |\Supp(\bx)|$.
For a probability distribution $\bpi \in \Delta^{n-1}$, we define:

\begin{itemize}
    \item \emph{Shannon entropy} \cite{shannon1948mathematical}: 
    $H(\bpi) = -\sum_{i=1}^n \pi_i \log_2 \pi_i$, with the convention $0\log_2 0 = 0$.
    
    \item \emph{Rényi entropy} (order $\alpha > 0$, $\alpha \neq 1$) \cite{renyi1961measures}: 
    $R_\alpha(\bpi) = \frac{1}{1-\alpha} \log_2 \left( \sum_{i=1}^n \pi_i^\alpha \right)$, note that as $\alpha \to 1$, $R_\alpha(\bpi) \to H(\bpi)$.

    \item \emph{Kullback--Leibler (KL) divergence} between $\bp, \bq \in \Delta^{n-1}$ \cite{kullback1951information}:
    $D_{\mathrm{KL}}(\bp \| \bq) = \sum_{i=1}^n p_i \log_2 \frac{p_i}{q_i}$,
    with the conventions $0\log_2(0/0) = 0$ and $p_i\log_2(p_i/0) = +\infty$ if $p_i > 0$.
\end{itemize}

 \section{Motivation and Effective Sparsity}\label{sec.concept}

Before establishing the formal mathematical framework of the effective number of nonzeros (ENZ), we first illustrate the fundamental distinction between strict support cardinality and effective support concentration. In realistic scientific computing and inverse problems, recovered vectors are rarely strictly sparse; rather, small but non-zero entries routinely emerge due to ambient noise, discretization artifacts, or numerical modeling errors. While these infinitesimal components inflate the literal support size and make the signal appear high-dimensional under the $\ell_0$ metric, the underlying meaningful structure remains governed by a compact set of dominant coordinates. This section provides systematic empirical evidence of this phenomenon in both transform-domain images and latent semantic textual data, demonstrating that strict cardinality counts are overly rigid for the regimes in which sparse recovery algorithms are practically deployed.

\subsection{Effective Sparsity in Natural Images}\label{sec.effective.sparsity.images}

We first analyze natural images from both a transform-domain and a spatial-structure perspective to characterize the decay profile of spatial variations.

\paragraph{Wavelet-domain coefficient decay}
To examine the spectral energy distribution in a standard transform domain, we evaluate a benchmark collection of standard natural test images (including \emph{cameraman}, \emph{peppers}, and related standard profiles). Each image is normalized by its maximum pixel value and decomposed using a Daubechies-4 (db4) wavelet basis with four levels of decomposition, performed independently across each color channel where applicable. The resulting wavelet coefficients are sorted in descending order of magnitude and normalized by their maximum absolute value. 

As illustrated in \Cref{fig.test.image}, the empirical decay curves reveal a highly consistent, rapid decay  pattern across the entire dataset. While the overwhelming majority of the coefficients are strictly non-zero, their magnitudes plunge exponentially. Only a minute fraction of the largest coefficients captures the dominant energy and semantic structure of the image, while the remaining dense coefficients form a long, negligible tail. This clear concentration verifies that natural images natively exhibit pronounced effective sparsity rather than strict $\ell_0$ sparsity.

\begin{figure}[h]
\centering
\includegraphics[scale=0.33]{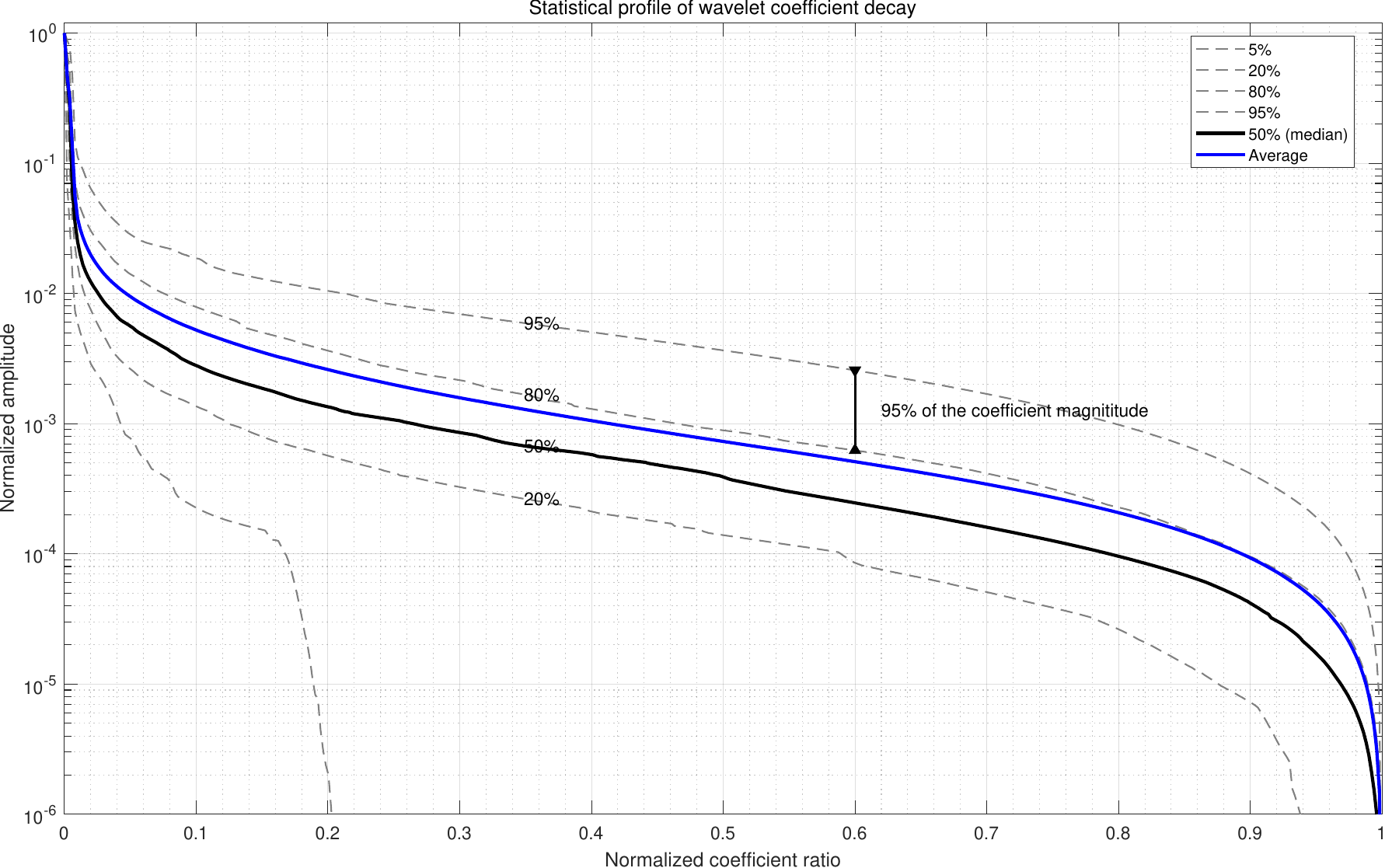}
\caption{Sorted magnitude decay curves of wavelet coefficients across a standard image benchmark. The thin lines represent individual image channels, while the solid blue and black lines denote the mean and median decay, respectively, bounded by dashed percentile envelopes. The rapid, heavy-tailed decay indicates that the primary image energy is concentrated on a small fraction of dominant coefficients, while the rest form an uninformative tail.} 
\label{fig.test.image} 
\end{figure} 

\paragraph{Directional total variation decay}
To investigate whether this structural concentration persists directly within the spatial domain, we analyze the finite differences of the image gradients. For an image $I \in \mathbb{R}^{m \times n}$, the forward finite difference operators are denoted by $(D_x I)_{i,j} = I_{i,j+1} - I_{i,j}$ and $(D_y I)_{i,j} = I_{i+1,j} - I_{i,j}$, yielding the discrete gradient vector $\nabla I_{i,j} = [D_x I_{i,j}, D_y I_{i,j}]^\top$. The anisotropic total variation (TV) is then defined as $\mathrm{TV}(I) = \|\nabla I\|_1 = \sum_{i,j} (|(D_x I)_{i,j}| + |(D_y I)_{i,j}|)$. 

To evaluate gradient sparsity, we collect the complete sets of horizontal and vertical difference magnitudes, sort them in descending order, and normalize them by their maximum values. As displayed in \Cref{fig.test.tv}, the horizontal and vertical TV components exhibit nearly identical, rapid power-law decay profiles. This spatial attenuation confirms that image gradients possess strong effective sparsity: the essential geometric transitions and edges are captured by a small subset of sharp boundaries, while the smooth regions generate a dense sea of non-zero but numerically negligible variations.

\begin{figure}[tbp]
    \centering 
    \subfloat[Vertical TV]{\includegraphics[width=0.48\textwidth]{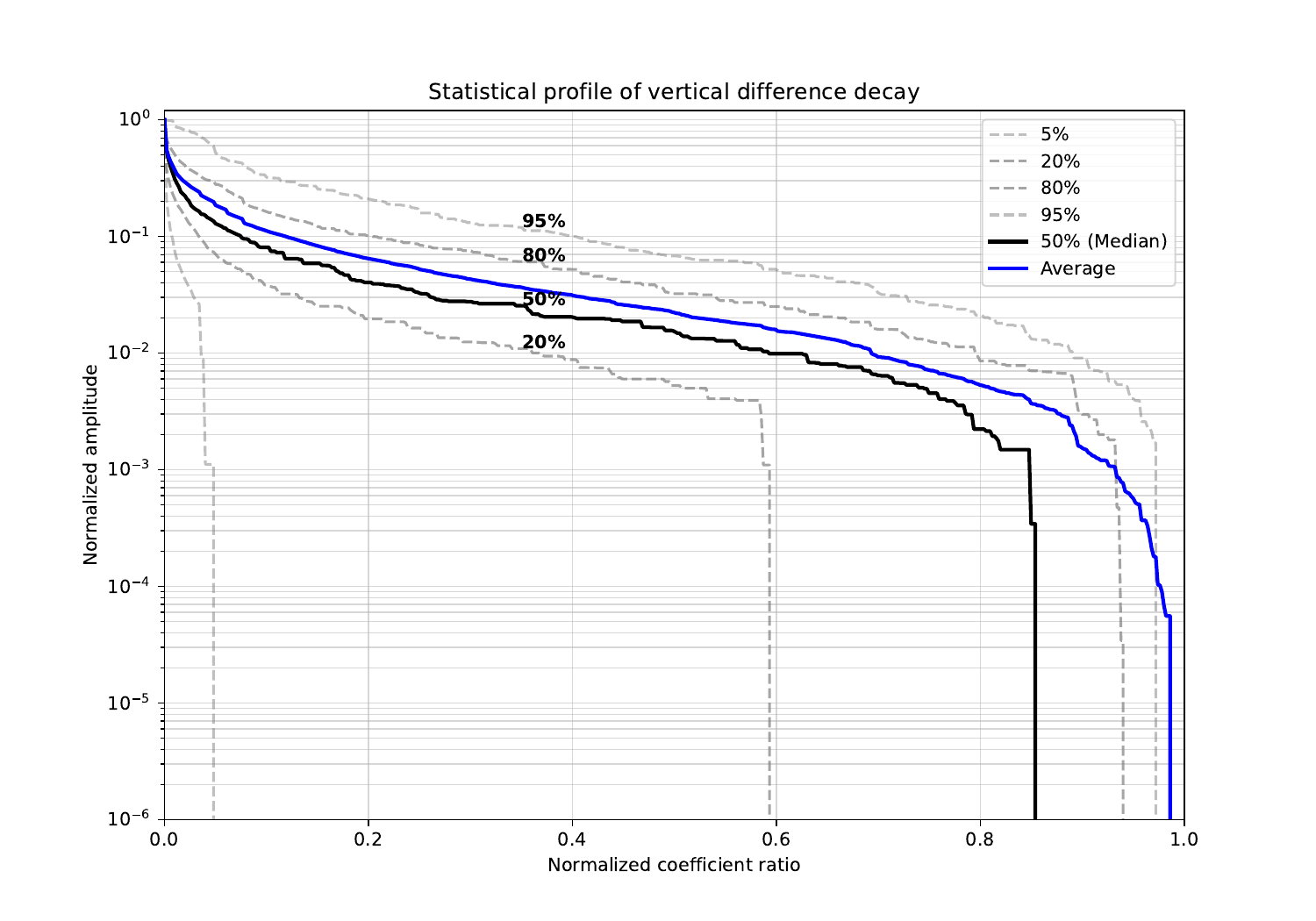}}\hfill
    \subfloat[Horizontal TV]{\includegraphics[width=0.48\textwidth]{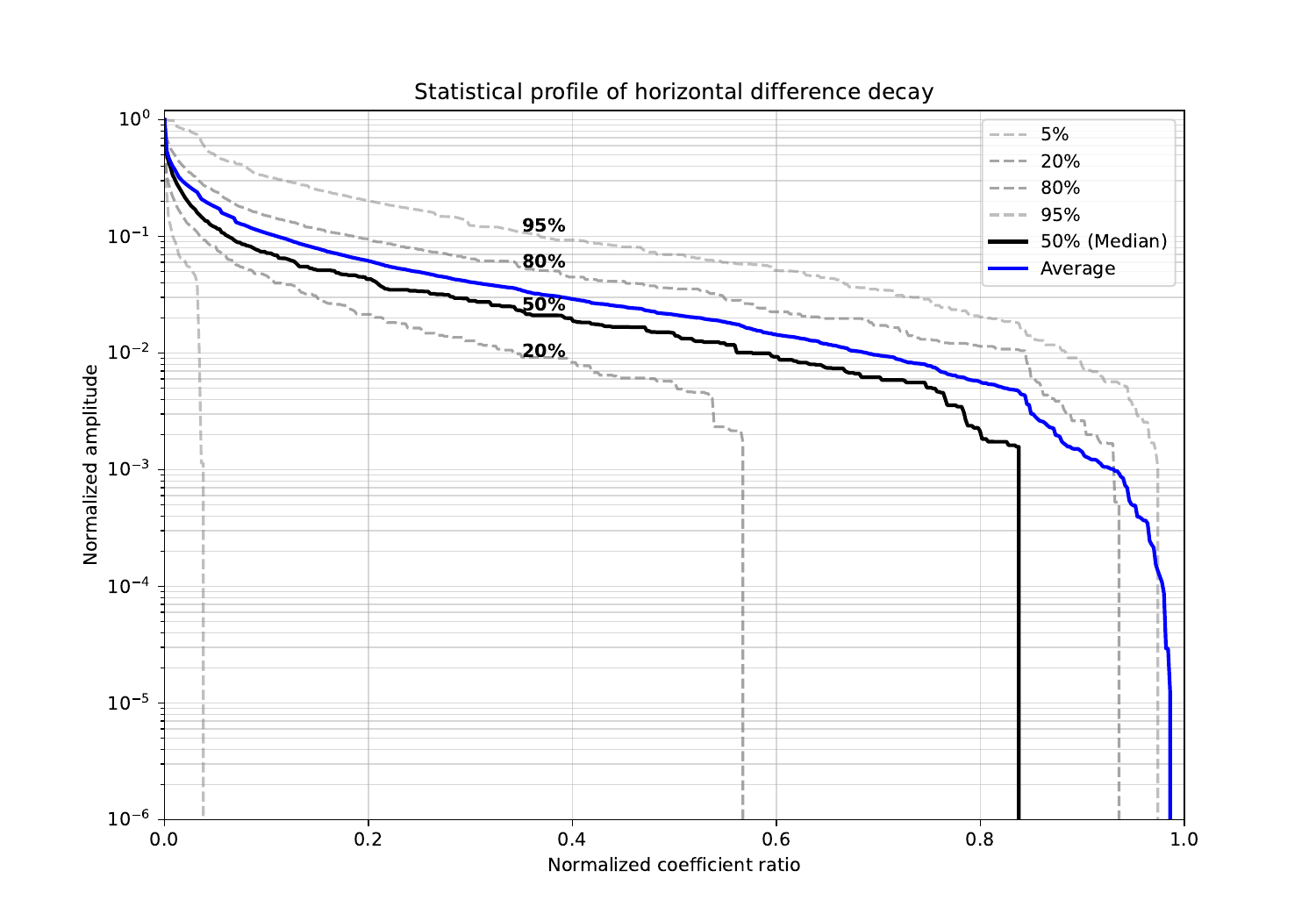}}
	\caption{Sorted magnitude decay of directional total variation (TV) coefficients. The profiles demonstrate nearly symmetrical and rapid spatial gradient attenuation across both horizontal and vertical directions, showing that natural image gradients are governed by a compact set of sharp transitions buried in dense, low-amplitude ambient variations.}
	\label{fig.test.tv}
\end{figure} 

\subsection{Spectral Decay and Effective Sparsity in Textual Data}

Unlike natural images, where structural concentration is typically verified in fixed functional bases, high-dimensional textual data presents effective sparsity within its latent semantic spaces. Following the spectral complex systems framework, we analyze the term-document matrices constructed from the 20 Newsgroups dataset \cite{20Newsgroups}, building on the premise that textual information constituting a single coherent topic is inherently low-rank \cite{thibeault2024low}. 

To evaluate this without global corpus distortion, we partition the documents into independent semantic categories and construct a local term-document matrix $X \in \mathbb{R}^{n \times d}$ for each topic using TF-IDF features (including bigrams) without row-wise $\ell_2$ normalization to preserve true magnitude variations. We then compute the singular value decomposition (SVD) of each matrix and quantify the informational variance via the normalized spectral energy $\lambda_k = \sigma_k^2 / \sigma_{\max}^2$, which corresponds to the normalized eigenvalues of the empirical covariance matrix.

The aggregated spectral profiles across all categories are shown in \Cref{fig.text.spectrum}. Although the ambient vocabulary space is exceptionally large ($d \approx 50,000$), the singular values decay precipitously according to a heavy-tailed power-law distribution. This demonstrates that the vast majority of semantic variance within a given topic is contained in a low-dimensional latent subspace. The dense, extended tail of non-zero singular values corresponds to idiosyncratic word usage or ambient noise that carries minimal predictive power, confirming that effective sparsity fundamentally dictates the intrinsic complexity of linguistic systems.

\begin{figure}[h]
	\centering
	\includegraphics[width=0.6\textwidth]{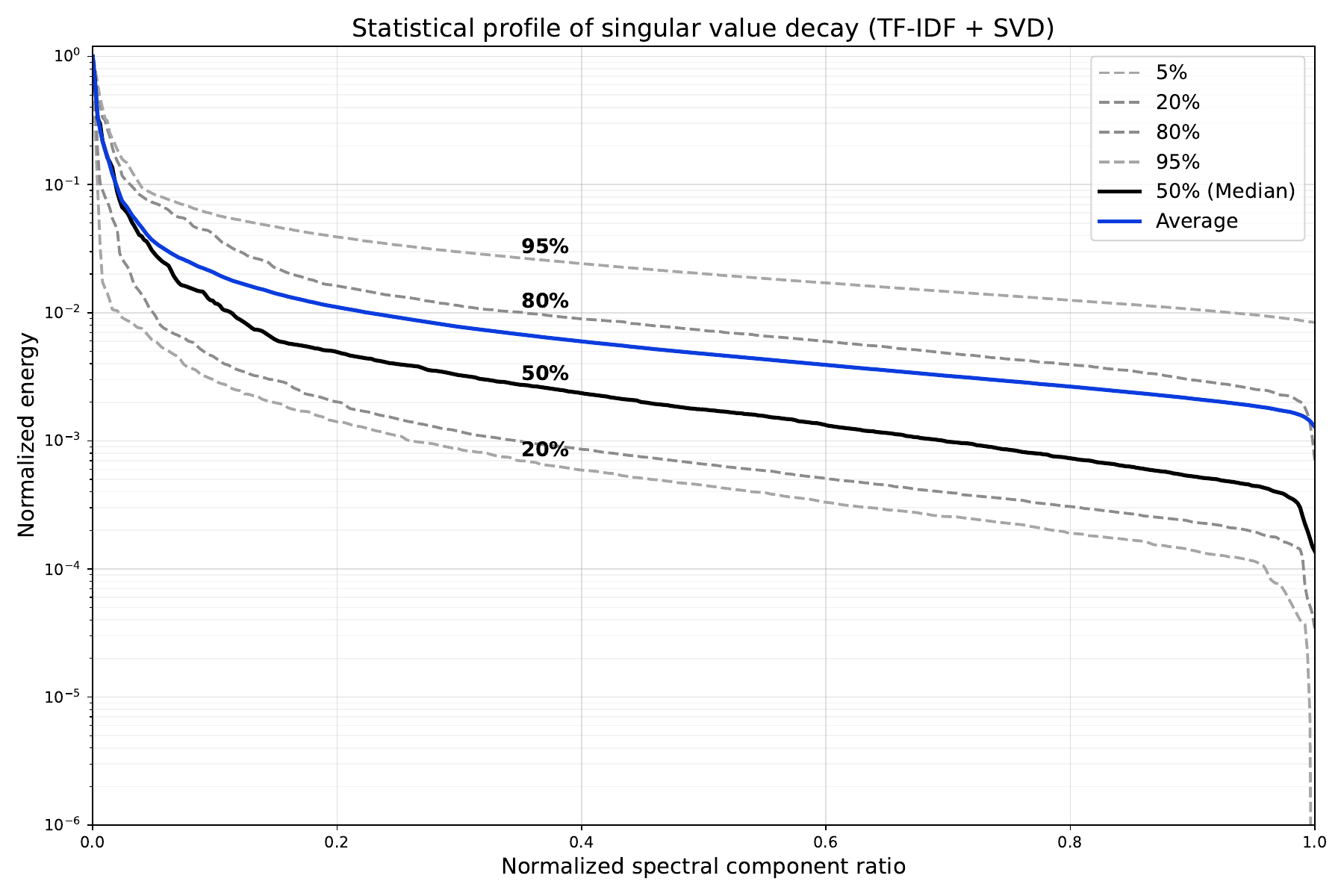}
	\caption{Aggregated spectral energy profiles ($\sigma_k^2 / \sigma_{\max}^2$) of term-document matrices from the 20 Newsgroups dataset. The sharp initial drop across all categories establishes that semantic information within individual topics is concentrated within a very low-dimensional latent subspace, leaving an uninformative tail of minor singular values.}
	\label{fig.text.spectrum}
\end{figure}

In summary, the empirical evidence demonstrates that realistic high-dimensional data are rarely strictly sparse, but instead feature a dominant structure obscured by a dense long tail of negligible coefficients. In practical inverse problems, these numerical fluctuations artificially inflate the discrete $\ell_0$ cardinality, yielding overly pessimistic theoretical bounds that do not reflect the true signal complexity. This discrepancy directly motivates moving beyond rigid binary counting. To adaptively discount uninformative tails while preserving meaningful signal mass, a continuous, scale-aware metric is required---a framework formally introduced in the next section via the effective number of nonzeros (ENZ).

 \section{Theory of the Effective Number of Nonzeros}\label{sec:effective_nonzeros}

\subsection{The ENZ Sparsity Measure}\label{sec:enz}

To formalize the distinction between literal support cardinality and effective informational sparsity, we introduce a continuous, magnitude-aware metric governed by the relative distribution of signal energy. For a non-zero vector $\bx \in \mathbb{R}^n$, we define its normalized magnitude distribution $\boldsymbol{\pi}(\bx)$ with coordinates:
\begin{equation}\label{eq:bpi}
    \pi_i(\bx) = \frac{|x_i|}{\|\bx\|_1}, \qquad i = 1,\dots,n.
\end{equation}
The associated Shannon entropy (measured in bits) is expressed as 
$$\mathcal{H}(\bx) = - \sum_{i=1}^n \pi_i(\bx)\log_2 \pi_i(\bx)$$
 with the standard convention $0\log_2 0 = 0$. The effective number of nonzeros (ENZ) is then defined by exponentiating this entropic core:
\begin{equation}
    \mathrm{ENZ}(\bx) := 2^{\mathcal{H}(\bx)}.
\end{equation}

The ENZ operates as a continuous generalization of the discrete $\ell_0$ norm. If $\bx$ is strictly $k$-sparse with equal magnitudes on its support, its normalized distribution becomes uniform, yielding $\mathcal{H}(\bx)=\log_2 k$ and $\mathrm{ENZ}(\bx)=k$. To characterize its robustness against noise, let $S$ be the dominant support of $\bx$ with $|S|=k$, and let $\delta := \sum_{i \in S^c} \pi_i  \ll 1$ represent the aggregate $\ell_1$ mass of the uninformative long tail. Let  $\mathbf{p}$, $\mathbf{q}$ be the localized conditional distributions on $S$ and $S^c$, respectively, i.e., 
\[ p_i = \pi_i/(1-\delta), \ i\in S; \quad q_i = \pi_i/\delta, \ i\in S^c.\]
By the classical grouping property of Shannon entropy, the total entropy exactly decomposes as:
\begin{equation*}\label{eq:entropy_decomposition}
   \begin{aligned}
    \mathcal{H}(\bx) = & \ - \sum_{i\in S}\pi_i\log_2 \pi_i - \sum_{i\in S^c}\pi_i \log_2 \pi_i \\
    = & \ - \sum_{i\in S}(1-\delta)p_i\log_2((1-\delta)p_i) - \sum_{i\in S^c}\delta q_i \log_2(\delta q_i) \\ 
    = & \ -(1-\delta)\log_2(1-\delta)\sum_{i\in S}p_i - (1-\delta) \sum_{i\in S}p_i\log_2 p_i - \delta \log_2 \delta - \delta q_i \log_2 q_i \\ 
    = & \ \mathcal{H}_b(\delta) + (1-\delta)\mathcal{H}(\mathbf{p}) + \delta\mathcal{H}(\mathbf{q}),
    \end{aligned}
\end{equation*}
where $\mathcal{H}_b(\delta) = - (1-\delta)\log_2(1-\delta)-\delta\log_2\delta $ is the binary entropy function, As the tail mass fraction vanishes ($\delta \to 0$), the binary uncertainty $\mathcal{H}_b(\delta)$ approaches zero and $(1-\delta)\mathcal{H}(\mathbf{p}) \to \mathcal{H}(\mathbf{p}) \le \log_2 k$. Crucially, the tail's entropic contribution is scaled linearly by its mass fraction, yielding the bound $\delta\mathcal{H}(\mathbf{q}) \le \delta\log_2(n-k)$, which vanishes rapidly even in high ambient dimensions $n$. Consequently, the total entropy remains strictly governed by the dominant support concentration, $\mathcal{H}(\bx) \approx \log_2 k$, ensuring that $\mathrm{ENZ}(\bx) \approx k$. This continuous structure allows the metric to track the effective dimensionality of the signal while remaining immune to expansive, low-energy perturbations that would otherwise cause the discrete $\ell_0$ count to explode \cite{cover1999elements, jaynes1957information}.

\begin{figure}[h]
\centering
\setlength{\tabcolsep}{2pt}
\begin{tabular}{ccc}
  \subfloat[$\ell_0$ unit ball]{\label{fig:l0}\includegraphics[width=0.285\textwidth]{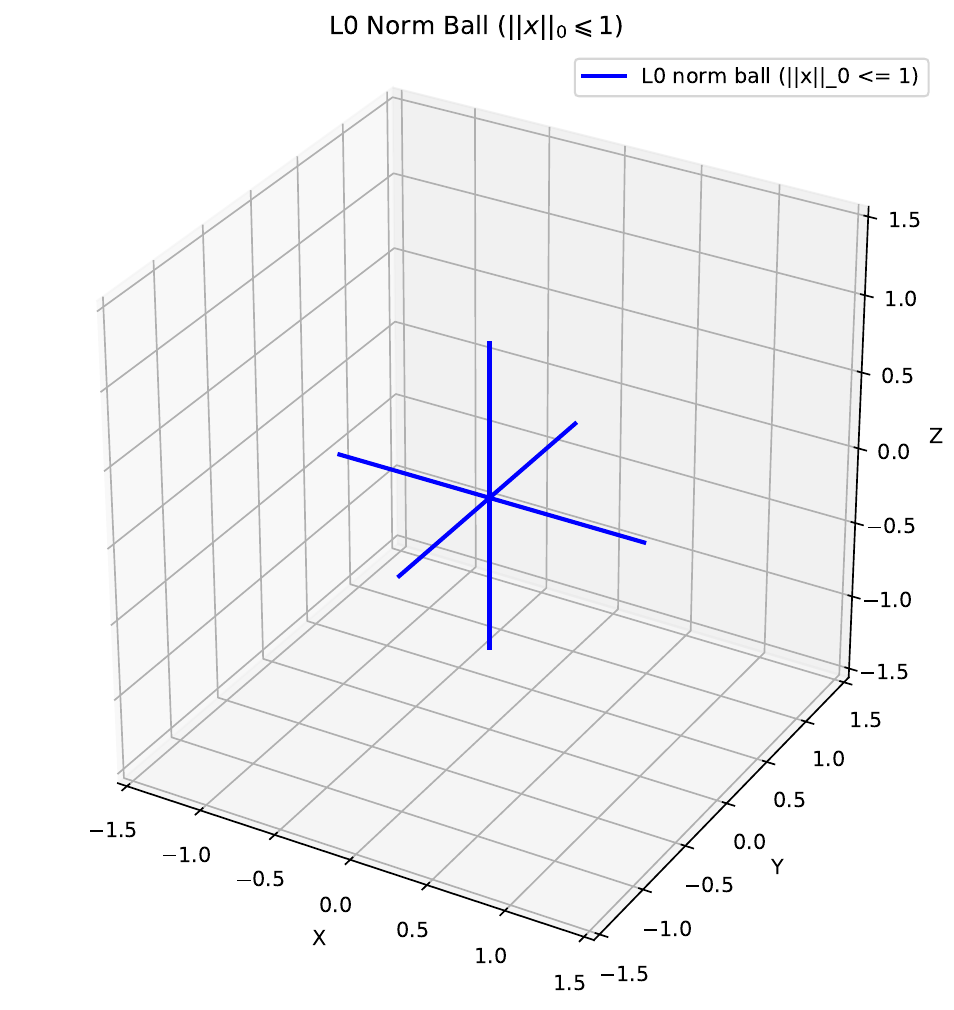}} &
  \subfloat[$\ell_1$ unit ball]{\label{fig:l1}\includegraphics[width=0.285\textwidth]{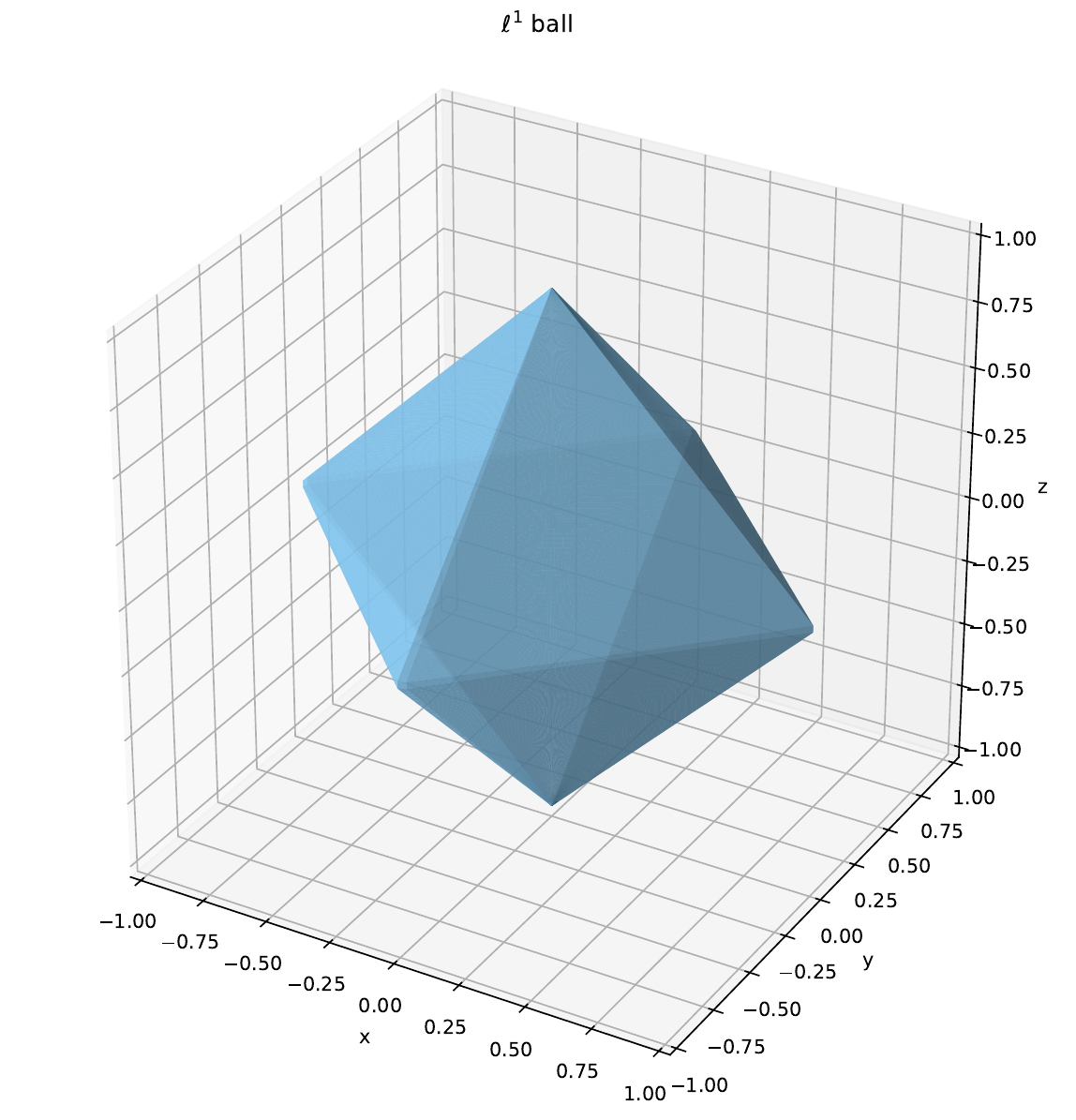}} &
  \subfloat[$\ell_{0.5}$ unit ball]{\label{fig:l05}\includegraphics[width=0.285\textwidth]{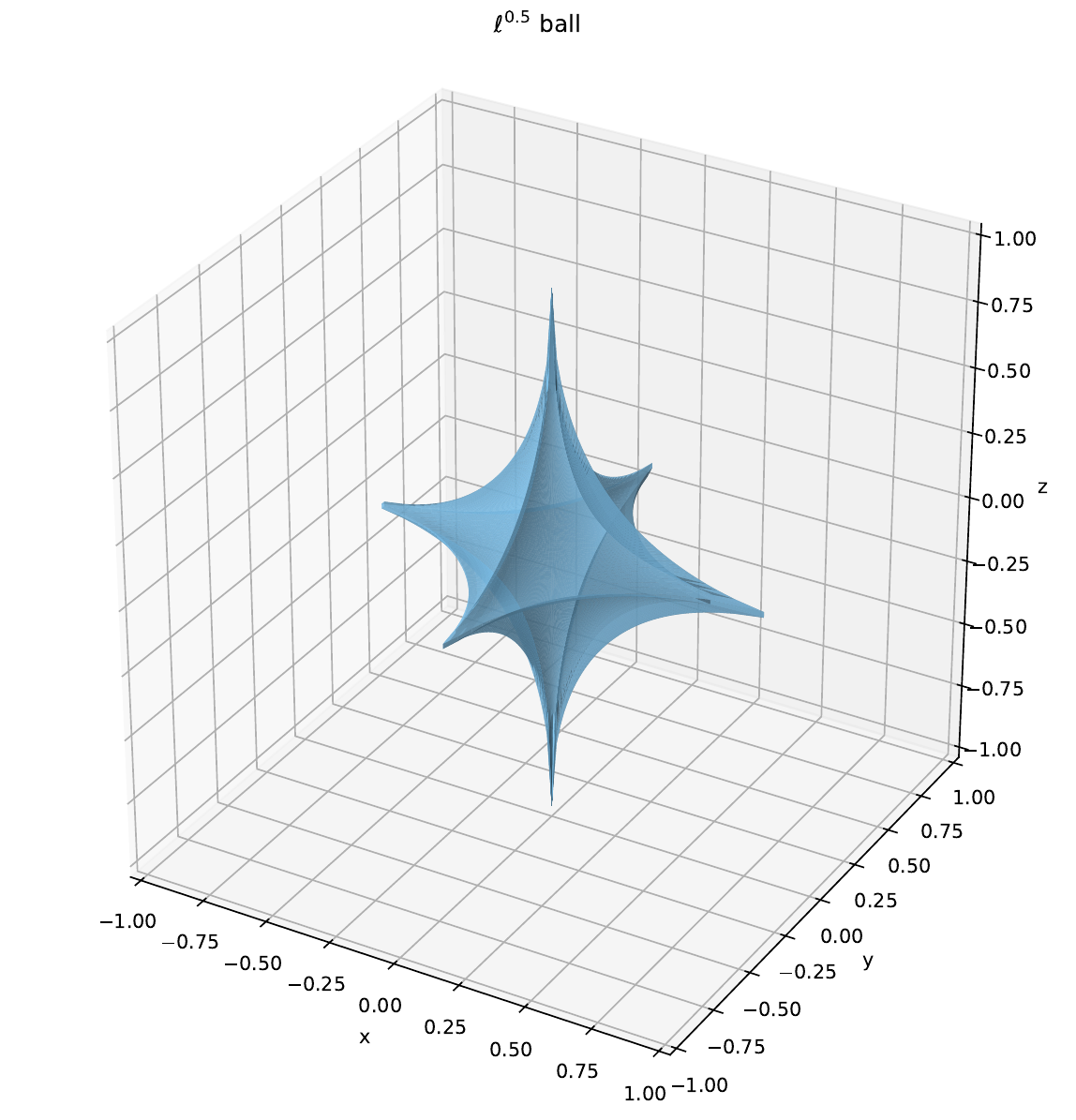}} \\
  \subfloat[$\ell_1-0.5\ell_2$ unit ball]{\label{fig:l1_2}\includegraphics[width=0.285\textwidth]{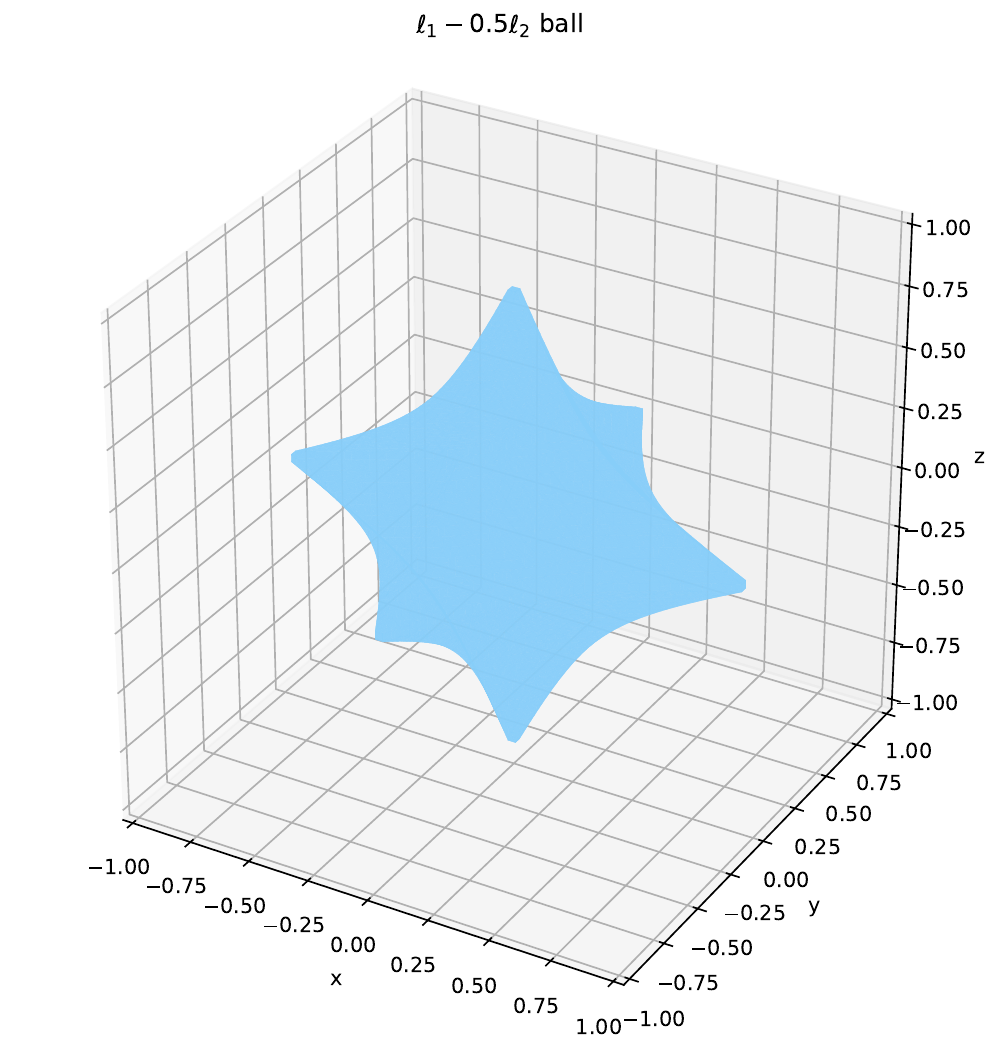}} &
  \subfloat[\textbf{ENZ $\le 1$}]{\label{fig:l1/2}\includegraphics[width=0.285\textwidth]{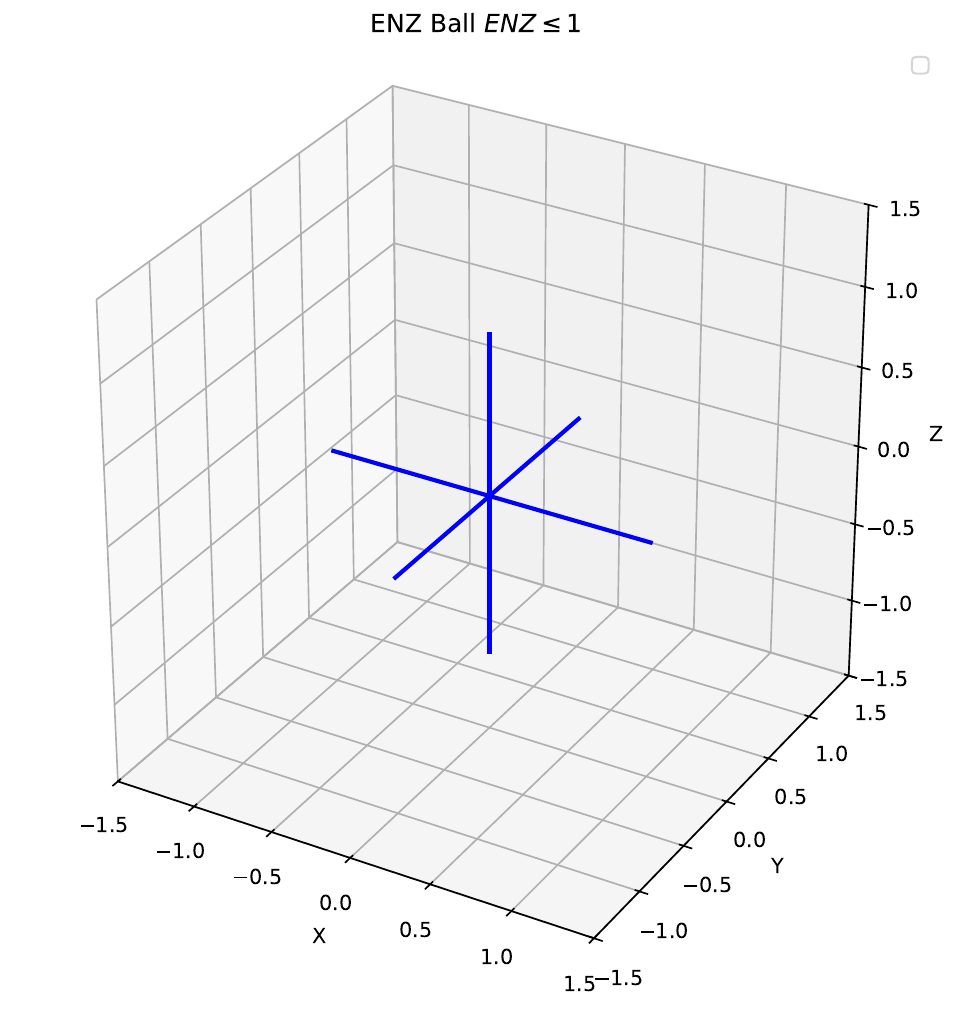}} &
  \subfloat[\textbf{ENZ $\le 1.5$}]{\label{fig:l1_ball_05}\includegraphics[width=0.285\textwidth]{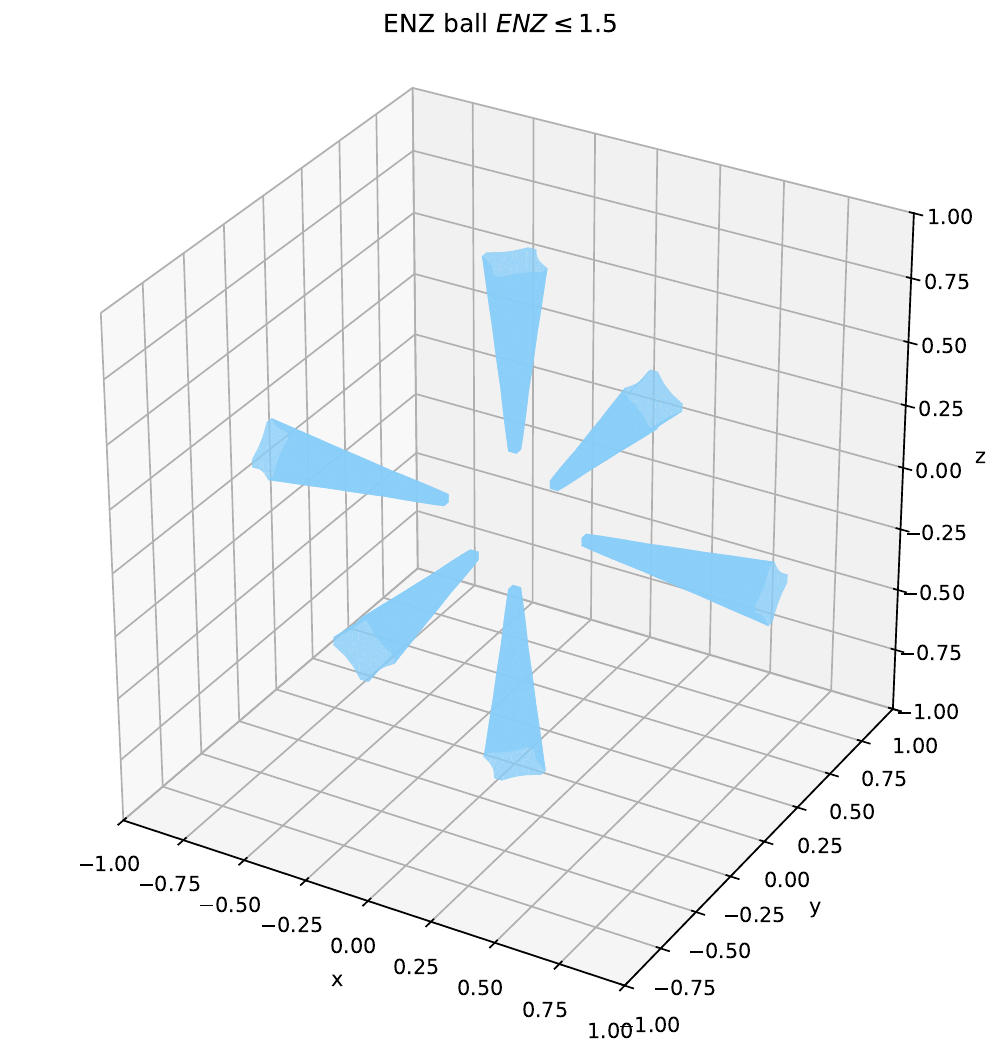}} 
\end{tabular}
\caption{Visualization of level sets in $\mathbb{R}^3$. The ENZ ball forms an unbounded, star-shaped geometry that asymptotically wraps around the coordinate axes, approximating the sparse axes without introducing the magnitude shrinkage bias characteristic of bounded $\ell_p$ balls.}
\label{fig:l012}
\end{figure}

The optimization properties of this regularizer are inherently tied to the geometry of its level sets. As shown in \Cref{fig:l012}, classical $\ell_p$ unit balls ($0 < p \le 1$) form closed, bounded neighborhoods around the origin. Although their sharp cusps along the axes promote sparsity, their bounded nature dictates that the regularizer penalty grows continuously with coordinate magnitudes, inevitably introducing a severe shrinkage bias that suppresses significant signal components during reconstruction.

In contrast, the level sets of the ENZ manifest as unbounded, star-shaped surfaces that asymptotically approximate the coordinate axes. Because the ENZ evaluates the normalized energy concentration $\boldsymbol{\pi}(\bx)$ rather than absolute lengths, it is inherently scale-invariant, allowing components on the effective support to grow arbitrarily large without incurring additional penalties. This geometry effectively decouples support identification from amplitude estimation, eliminating the structural shrinkage bias of classical $\ell_p$ penalties while providing a smooth, continuous proxy for the discrete $\ell_0$ cross.
 \subsection{Decomposition of the Effective Number of Nonzeros}
\label{sec:decomposition}

In this subsection, we derive an exact structural decomposition for the ENZ, revealing it as an information-theoretic softening of the combinatorial $\ell_0$ norm. This result demonstrates that the ENZ does not merely approximate support cardinality; rather, it mathematically discounts each nonzero component based on its relative energy concentration.

\begin{theorem}[Decomposition of Effective Nonzeros]
\label{thm:shannon_decomp}
Let $\bx \in \mathbb{R}^n \setminus \{0\}$, and let $\bpi$ be the normalized magnitude distribution where $\pi_i = |x_i|/\|\bx\|_1$. Let $S = \Supp(\bx)$ and denote $\bu$ as the uniform distribution over $S$, i.e., $u_i = 1/\|\bx\|_0$ for $i \in S$. The normalized Shannon entropy $\Hcal(\bx)$ admits the following decomposition:
\begin{equation}
    \Hcal(\bx) = \log_2 \|\bx\|_0 - D_{\mathrm{KL}}(\bpi \| \bu).
\end{equation}
Consequently, the effective number of nonzeros satisfies:
\begin{equation}
    \label{eq:shannon_exp_decomp}
    \mathrm{ENZ}(\bx) := 2^{\Hcal(\bx)} = \|\bx\|_0 \cdot 2^{-D_{\mathrm{KL}}(\bpi \| \bu)},
\end{equation}
where $D_{\mathrm{KL}}(\bpi \| \bu) \ge 0$ is the Kullback-Leibler divergence. Equality holds if and only if $|x_i|$ is constant for all $i \in S$.
\end{theorem}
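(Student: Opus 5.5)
The plan is a direct computation of the KL divergence, followed by an appeal to Gibbs' inequality for nonnegativity and the equality case. Write $k=\|\bx\|_0=|S|$. First I will observe that $\pi_i>0$ exactly when $i\in S$. Hence, under the convention $0\log_2 0=0$, every sum defining $\Hcal(\bx)$ and $D_{\mathrm{KL}}(\bpi\|\bu)$ can be restricted to $S$. This also settles the support issue. The uniform distribution $\bu$ vanishes off $S$, but so does $\bpi$, so no term of the form $p_i\log_2(p_i/0)$ with $p_i>0$ occurs and the divergence is finite.

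Next I will expand the divergence on $S$, where $u_i=1/k$:
\[
D_{\mathrm{KL}}(\bpi\|\bu)=\sum_{i\in S}\pi_i\log_2\frac{\pi_i}{1/k}
=\sum_{i\in S}\pi_i\log_2\pi_i+\log_2 k\sum_{i\in S}\pi_i
=-\Hcal(\bx)+\log_2 k .
\]
The last step uses $\sum_{i\in S}\pi_i=1$, which holds because $\|\bx\|_1=\sum_{i\in S}|x_i|$. Rearranging this identity gives the first display of the theorem. Applying $t\mapsto 2^t$ to both sides then gives \eqref{eq:shannon_exp_decomp}, since $2^{\log_2 k}=k$.

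For nonnegativity I will use Gibbs' inequality, proved through Jensen's inequality for the concave function $\log_2$:
\[
-D_{\mathrm{KL}}(\bpi\|\bu)=\sum_{i\in S}\pi_i\log_2\frac{u_i}{\pi_i}\le\log_2\sum_{i\in S}u_i=0 .
\]
By strict concavity, equality holds if and only if the ratio $u_i/\pi_i$ is constant on $S$. That means $\pi_i=1/k$ for all $i\in S$, which is equivalent to $|x_i|=\|\bx\|_1/k$ being constant on $S$. The converse direction is immediate by substitution.

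No step is a genuine obstacle. The only points that need care are bookkeeping: the support conventions, so that the divergence against a distribution supported on $S$ is finite and the zero coordinates drop out, and making the equality statement precise. The equality in question is $D_{\mathrm{KL}}=0$, equivalently $\mathrm{ENZ}(\bx)=\|\bx\|_0$, and it is characterized by the equality case of Jensen's inequality.
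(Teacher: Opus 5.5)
Your proposal is correct and follows the paper's proof: restrict the divergence to $S$, expand $\log_2(\pi_i/u_i)$ with $u_i = 1/\|\bx\|_0$, use $\sum_{i\in S}\pi_i = 1$, then exponentiate. Your Jensen (Gibbs) argument for $D_{\mathrm{KL}}(\bpi\|\bu)\ge 0$ and for the equality case goes beyond the paper, which states those two facts without proof.
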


\begin{proof}
By the definition of the Kullback-Leibler divergence restricted to the active support $S$:
\[
D_{\mathrm{KL}}(\bpi \| \bu) = \sum_{i \in S} \pi_i \log_2 \frac{\pi_i}{u_i} = \sum_{i \in S} \pi_i \log_2 \pi_i - \sum_{i \in S} \pi_i \log_2 \frac{1}{\|\bx\|_0}.
\]
Using the conservation property $\sum_{i \in S} \pi_i = 1$, we obtain $D_{\mathrm{KL}}(\bpi \| \bu) = -\Hcal(\bx) + \log_2 \|\bx\|_0$. Rearranging the terms and applying the exponential mapping yields \eqref{eq:shannon_exp_decomp}.
\end{proof}

\begin{figure}[ht]
\centering
\begin{tikzpicture}[
    node distance=1cm, 
    auto, 
    thick,
    block/.style={
        rectangle, 
        draw, 
        minimum width=2.5cm, 
        minimum height=1.2cm, 
        rounded corners=2pt,
        align=center,
        font=\small
    }
]
    \node[block, fill=pink!20] (enz) {{ENZ} \\ $2^{\Hcal(\bx)}$};
    \node[right=0.4cm of enz] (eq) {\scalebox{1.2}{$=$}};
    \node[block, fill=blue!20, right=0.4cm of eq] (l0) {{Support Size} \\ $\|\bx\|_0$};
    \node[right=0.4cm of l0] (times) {\scalebox{1.2}{$\times$}};
    \node[block, fill=red!20, right=0.4cm of times, minimum width=4cm] (kl) {{Distributional Efficiency} \\ $2^{-D_{\mathrm{KL}}(\bpi\|\bu)}$};

    \node [below=0.6cm of l0, text width=3.5cm, font=\footnotesize, color=darkgray, align=center] (l0_desc) 
        {Combinatorial count \\ (Integer-valued)};
        
    \node [below=0.6cm of kl, text width=5cm, font=\footnotesize, color=darkgray, align=center] (kl_desc) 
        {Information-theoretic discount \\ (Continuous in $(0, 1]$)};

    \draw[dashed, gray, ->, >=stealth] (l0.south) -- (l0_desc.north);
    \draw[dashed, gray, ->, >=stealth] (kl.south) -- (kl_desc.north);
\end{tikzpicture}
\caption{Structural decomposition of the ENZ. The metric factors the discrete support cardinality into a product with a continuous distributional efficiency term, bridging the gap between combinatorial and informational sparsity.}
\label{fig:enz_decomposition}
\end{figure} 

As visually summarized in \Cref{fig:enz_decomposition}, \Cref{thm:shannon_decomp} establishes that the ENZ operates as a continuous multiplicative adjustment to the discrete $\ell_0$ norm, rather than a magnitude-based coordinate shrinkage typical of $\ell_p$ regularizers. By defining the multiplicative factor $\eta(\bx) := 2^{-D_{\mathrm{KL}}(\bpi\|\bu)} \in (0,1]$ as the distributional efficiency of the active support, the metric cleanly interpolates between combinatorial support size and informational mass concentration. In practical inverse problems plagued by ambient fluctuations, the presence of a dense tail of negligible coefficients artificially inflates the integer count $\|\bx\|_0$, yet it forces the empirical distribution $\bpi$ far from the uniform state $\bu$, driving the divergence $D_{\mathrm{KL}}(\bpi\|\bu)$ upward and scaling $\eta(\bx)$ toward zero. Consequently, this informational discount adaptively downweights uninformative components without suppressing the true underlying signal profile. From an optimization standpoint, this behavior underscores a fundamental distinction: while classical $\ell_1$ or $\ell_p$ penalties penalize variables coordinate-wise, the entropic regularizer evaluates the global shape of the normalized amplitude spectrum, simultaneously penalizing support expansion and noise proliferation.

\begin{remark}[Hartley Entropy and the $\ell_0$ Limit]
The $\ell_0$ norm can be elegantly situated within this information-theoretic framework as a limiting case. In the literature, the {Hartley entropy} (or Rényi entropy of order zero) of a discrete distribution is defined as the logarithm of the support size \cite{hartley1928transmission}. For any $\bx \in \mathbb{R}^n$, we have:
\begin{equation*}
\Hcal_h(\bx) = \log_2 \|\bx\|_0, \qquad \text{and} \qquad \mathrm{ENZ}_0(\bx) = 2^{\Hcal_h(\bx)} = \|\bx\|_0.
\end{equation*}
This connection further confirms that our framework provides a unified view of sparsity. The classical $\ell_0$ norm represents a maximum-entropy scenario where all components are assumed equally significant, while ENZ refines this measure by accounting for the true informational concentration of the signal.
\end{remark}

This softening effect is geometrically illustrated on the 2-simplex $\Delta^2$ in \Cref{fig:simplex_heatmap}. The vertices represent 1-sparse signals where the Kullback-Leibler divergence is maximized relative to the 3-element support, forcing $\mathrm{ENZ}=1$. Conversely, the barycenter denotes the uniform distribution where $D_{\mathrm{KL}}(\bpi \| \bu) = 0$, yielding $\mathrm{ENZ}=3$. As the signal trajectory moves away from the uniform manifold toward lower-cardinality faces, the ENZ smoothly tracks the changing effective dimensionality, providing a well-conditioned optimization landscape for numerical sparse recovery.

\begin{figure}[ht]
\centering
\includegraphics[width=0.55\textwidth]{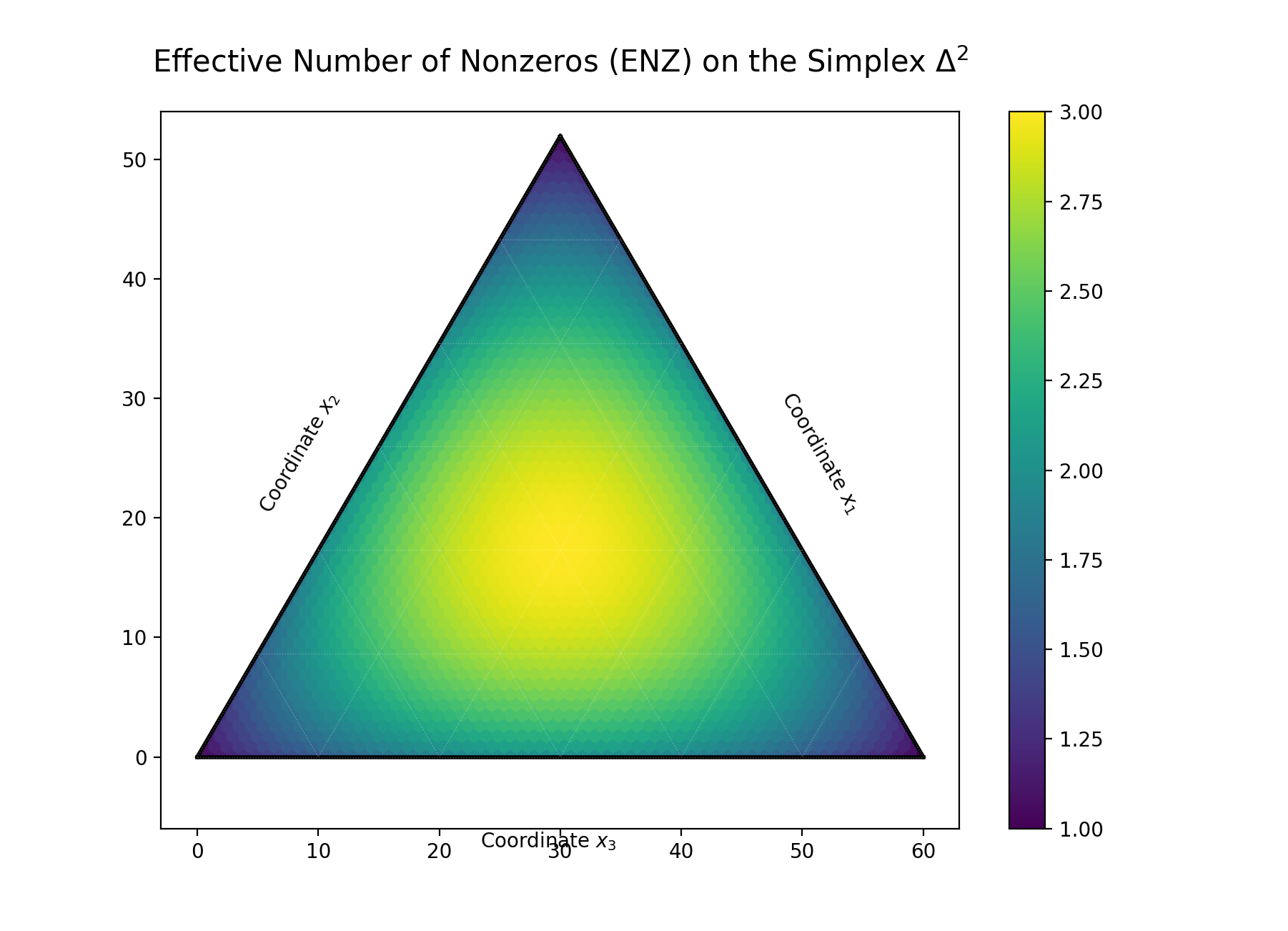}
\caption{Empirical landscape of the ENZ on the 2-simplex $\Delta^2$. The continuous gradient demonstrates how the entropic metric smoothly interpolates between lower-cardinality vertices and the uniform barycenter, providing a well-conditioned softening of the combinatorial $\ell_0$ cross.}
\label{fig:simplex_heatmap}
\end{figure}
\subsection{R\'enyi Extensions and Structural Hierarchy}\label{sec:entropy}

Having established a stability result for approximate sparsity, we next place the Shannon ENZ within a broader entropy-based family. While Shannon entropy provides a natural proxy for effective sparsity, it represents only a single point in the broader spectrum of information-theoretic functions. In this section, we generalize the construction to the R\'enyi entropy family and show how several scale-invariant sparsity measures arise as special cases. 

R\'enyi entropy introduces a tunable parameter $\alpha$, which allows for a controlled emphasis on different regions of the signal's magnitude distribution. This flexibility is useful for quantifying sparsity in high-dimensional settings where sensitivity to the tail or the peak of the distribution changes the behavior of the regularizer.

%
%


The R\'enyi entropy of order $\alpha$ ($\alpha > 0, \alpha \neq 1$) is defined as:
\begin{equation}
\Rcal_\alpha(\bx) = \frac{1}{1-\alpha} \log_2 \left( \sum_{i=1}^n \pi_i^\alpha \right).
\end{equation}
By varying $\alpha$, one can interpolate between different notions of concentration. Specifically, as $\alpha$ increases, the entropy becomes increasingly sensitive to the dominant components, whereas smaller values of $\alpha$ provide a more democratic assessment of the signal's support.

Analogous to the Shannon ENZ, we define the  R\'enyi Effective Number of Nonzeros (R\'enyi ENZ)  as:
\begin{equation}
\enz^R_\alpha(\bx) := 2^{\Rcal_\alpha(\bx)} = \left( \sum_{i=1}^n \pi_i^\alpha \right)^{\frac{1}{1-\alpha}}.
\end{equation}
This quantity provides a parameterized family of sparsity measures that inhabit the space between literal support cardinality and energy concentration.


The Rényi ENZ family is remarkably expressive, recovering several ``industry-standard'' sparsity measures as $\alpha$ takes specific values:
\begin{itemize}
    \item {The Shannon Limit ($\alpha \to 1$)}: Recovers the Shannon ENZ, $\enz(\bx) = 2^{\mathcal{H}(\bx)}$, providing the most balanced information-theoretic measure.
    \item {The $\ell_1/\ell_2$ Ratio ($\alpha = 2$)}: The second-order ENZ (Collision ENZ) corresponds exactly to the square of the $\ell_1/\ell_2$ ratio:
    \[ \enz^R_2(\bx) = \frac{\|\bx\|_1^2}{\|\bx\|_2^2}. \]
    This establishes a formal information-theoretic interpretation for the scale-invariant $\ell_1/\ell_2$ penalty widely used in sparse optimization.
    \item {The $\ell_1/\ell_\infty$ Ratio ($\alpha \to \infty$)}: In the limit of infinite order, the measure depends only on the peak magnitude, yielding $\enz^R_\infty(\bx) = \|\bx\|_1 / \|\bx\|_\infty$.
    \item {The $\ell_0$ Limit ($\alpha \to 0$)}: As the order approaches zero, the measure becomes indifferent to magnitudes, recovering the discrete cardinality
    \[
    \lim_{\alpha \to 0} \enz^R_\alpha(\bx) = \|\bx\|_0 .
    \]
\end{itemize}

\begin{figure}[h]
\centering
\begin{tikzpicture}[scale=1.2]
    \draw[->, thick] (0,0) -- (6,0) node[right] {Order $\alpha$};
    \draw[->, thick] (0,0) -- (0,4.5) node[above] {$\mathrm{ENZ}^R_\alpha(\bx)$};

    \draw[ultra thick, blue!80!black] plot [domain=0.1:5.5, samples=100] (\x, {1 + 3/(1+\x)});

    \filldraw [red] (0,4) circle (2pt);
    \node[left, red] at (0,4) {$\|\mathbf{x}\|_0$};
    
    \draw[dashed] (1,0) -- (1,2.5);
    \filldraw [black] (1,2.5) circle (1.5pt) node[above right] {Shannon $\mathrm{ENZ}$};
    \node[below] at (1,0) {$1$};

    \draw[dashed] (2,0) -- (2,2);
    \filldraw [black] (2,2) circle (1.5pt) node[above right] {$\|\mathbf{x}\|_1^2/\|\mathbf{x}\|_2^2$};
    \node[below] at (2,0) {$2$};

    \draw[dashed, gray] (0,1) -- (6,1);
    \node[right, gray] at (6,1) {$\alpha \to \infty: \|\mathbf{x}\|_1/\|\mathbf{x}\|_\infty$};

    \node[draw, fill=yellow!10, text width=4cm, align=center] at (4,3.5) {
        {Sparsity Hierarchy}\\
        $\alpha \uparrow \implies \mathrm{ENZ} \downarrow$
    };
\end{tikzpicture}
\caption{The Rényi ENZ spectrum as a function of order $\alpha$. The measure provides a continuous interpolation between the combinatorial $\ell_0$ norm ($\alpha \to 0$) and the scale-invariant ratio $\|\mathbf{x}\|_1/\|\mathbf{x}\|_\infty$ ($\alpha \to \infty$).}
\label{fig:renyi_spectrum}
\end{figure}

Crucially, the Rényi ENZ satisfies a {monotonicity property} with respect to $\alpha$. For any signal $\bx$, we have the following fundamental hierarchy of effective sparsity:
\begin{equation}
\|\bx\|_0 \ge \enz_{\text{Shannon}}(\bx) \ge \frac{\|\bx\|_1^2}{\|\bx\|_2^2} \ge \frac{\|\bx\|_1}{\|\bx\|_\infty}.
\end{equation}
The hierarchical structure of the Rényi ENZ is visually encapsulated in \Cref{fig:renyi_spectrum}. As the order $\alpha$ increases, the ENZ spectrum exhibits a strictly {monotonic decreasing property}, effectively ``scanning'' the signal from its combinatorial support toward its peak concentration. 
%
%


The relationship between Rényi ENZ and the $\ell_0$ norm is characterized by the following generalization of the decomposition theorem.

\begin{theorem}[Generalized Rényi Decomposition]\label{thm:renyi_decomp}
Let $\bx \in \mathbb{R}^n \setminus \{0\}$ and let $\bu$ be the uniform distribution over $S = \Supp(\bx)$. For any $\alpha > 0, \alpha \neq 1$, the Rényi entropy admits the decomposition:
\begin{equation}\label{eq:renyi_decomposition}
\Rcal_\alpha(\bx) = \log_2 \|\bx\|_0 - D_\alpha(\bpi \| \bu),
\end{equation}
where $D_\alpha(\bpi \| \bu) = \frac{1}{\alpha-1} \log_2 (\sum_{i \in S} \pi_i^\alpha u_i^{1-\alpha})$ is the Rényi divergence of order $\alpha$. Consequently, the Rényi ENZ satisfies the multiplicative relation:
\begin{equation}
\enz^R_\alpha(\bx) = \|\bx\|_0 \cdot 2^{-D_\alpha(\bpi \| \bu)} \le \|\bx\|_0.
\end{equation}
\end{theorem}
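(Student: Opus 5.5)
We prove the identity by substituting the uniform weights $u_i = 1/\|\bx\|_0$ into the definition of $D_\alpha$ and comparing with $\Rcal_\alpha(\bx)$. This parallels the proof of \Cref{thm:shannon_decomp}. Write $k := \|\bx\|_0$. Since $\pi_i = 0$ for $i \notin S$ and $\alpha > 0$, we have $\pi_i^\alpha = 0$ off $S$. Hence the sum defining $\Rcal_\alpha(\bx)$ may be restricted to $S$:
\[
\Rcal_\alpha(\bx) = \frac{1}{1-\alpha}\log_2\Bigl(\sum_{i\in S}\pi_i^\alpha\Bigr).
\]

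Next we evaluate the divergence. On $S$ we have $u_i^{1-\alpha} = k^{\alpha-1}$, which is constant and can be pulled out of the sum:
\[
D_\alpha(\bpi\|\bu) = \frac{1}{\alpha-1}\Bigl[(\alpha-1)\log_2 k + \log_2\Bigl(\sum_{i\in S}\pi_i^\alpha\Bigr)\Bigr] = \log_2 k - \Rcal_\alpha(\bx).
\]
Rearranging gives \eqref{eq:renyi_decomposition}. Exponentiating base $2$ then gives $\enz^R_\alpha(\bx) = k\cdot 2^{-D_\alpha(\bpi\|\bu)}$.

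The only substantive step is the inequality $\enz^R_\alpha(\bx)\le \|\bx\|_0$, which is equivalent to $D_\alpha(\bpi\|\bu)\ge 0$. We can verify this directly, without quoting general divergence theory, by splitting into cases according to $\alpha$.

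\emph{Case $\alpha>1$.} The map $t\mapsto t^\alpha$ is convex, so Jensen's inequality with uniform weights on $S$ gives
\[
\frac1k\sum_{i\in S}\pi_i^\alpha \ge \Bigl(\frac1k\sum_{i\in S}\pi_i\Bigr)^\alpha = k^{-\alpha}.
\]
Therefore $\sum_{i\in S}\pi_i^\alpha \ge k^{1-\alpha}$, so $\log_2\bigl(\sum_{i\in S}\pi_i^\alpha\bigr)\ge (1-\alpha)\log_2 k$. Dividing by the negative number $1-\alpha$ reverses the inequality and yields $\Rcal_\alpha(\bx)\le \log_2 k$.

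\emph{Case $0<\alpha<1$.} Now $t\mapsto t^\alpha$ is concave, and Jensen gives $\sum_{i\in S}\pi_i^\alpha \le k^{1-\alpha}$. Dividing by the positive number $1-\alpha$ again yields $\Rcal_\alpha(\bx)\le\log_2 k$.

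The main care point is this sign bookkeeping: the factor $1-\alpha$ changes sign exactly when the direction of Jensen's inequality flips, so the two reversals cancel in both regimes. As a by-product, strict convexity (respectively strict concavity) shows that equality holds if and only if $\pi_i$ is constant on $S$, matching the equality case in \Cref{thm:shannon_decomp}.
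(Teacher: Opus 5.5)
Your proof is correct and follows the same route the paper uses for \Cref{thm:shannon_decomp}, since the paper gives no separate proof of the R\'enyi version: substitute $u_i = 1/\|\bx\|_0$, pull the constant $\|\bx\|_0^{\alpha-1}$ out of the sum, and rearrange. The Jensen case split you use to justify $D_\alpha(\bpi\|\bu)\ge 0$, and hence $\enz^R_\alpha(\bx)\le\|\bx\|_0$, is sound (the sign of $1-\alpha$ flips exactly when convexity becomes concavity), and it fills in a step the paper leaves implicit.
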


\section{Stability of ENZ under Restricted Isometry}\label{sec:stability}

The preceding section defines ENZ and characterizes its decomposition into support size and distributional imbalance. We now ask how the dominant part of an approximately sparse signal behaves under noisy linear measurements. Traditional recovery guarantees in compressed sensing are often predicated on exact $k$-sparsity, a condition that is frequently violated in practical applications due to numerical noise and modeling inaccuracies. We establish a restricted-isometry stability bound showing that the dominant-support discrepancy is controlled by the measurement noise and the energy of the negligible signal tail.

\subsection{Problem Setup and Notation}

Consider a sensing matrix $A \in \mathbb{R}^{m \times n}$ and two signals $\bx, \by \in \mathbb{R}^n$ that satisfy the linear constraint $A\bx = A\by$. We assume $\bx$ and $\by$ are effectively $k$-sparse. To facilitate the analysis, we define the following sets and vectors:
\begin{itemize}
    \item $S_x :=$ indices of the $k$ largest (in magnitude) entries of $\bx$, with $S_x^c$ as its complement.
    \item $S_y :=$ indices of the $k$ largest (in magnitude) entries of $\by$, with $S_y^c$ as its complement.
    \item $T := S_x \cup S_y$ represents the union of the dominant supports, satisfying $|T| \le 2k$ with $T^c$ as its complement.
\end{itemize}
The difference vector is denoted by $\bh := \by - \bx \in \mathbb{R}^n$. For any index set $I \subset \{1,\dots,n\}$, we define $\bh_I$ as the sub-vector consisting of components indexed by $I$ and zeros elsewhere. This allows the decomposition:
\[
A \bh = A \bh_T + A \bh_{T^c}.
\]

\subsection{Restricted Isometry Property (RIP)}
The stability of recovery hinges on the Restricted Isometry Property (RIP) of order $2k$. Specifically, we assume $A$ satisfies:
\begin{equation}
(1-\delta_{2k}) \|\bu\|_2^2 \le \|A \bu\|_2^2 \le (1+\delta_{2k}) \|\bu\|_2^2, \quad \forall\, 2k\text{-sparse } \bu.
\label{eq:2k-RIP}
\end{equation}
where $\delta_{2k}\in(0,1)$ denotes the (smallest) restricted isometry constant of order $2k$ associated with $A$, i.e., the smallest value such that \eqref{eq:2k-RIP} holds for all $2k$-sparse vectors.

The following proposition summarizes essential properties derived from the $2k$-RIP.  The first inequality is a direct consequence of the \emph{restricted eigenvalue (RE) condition} \cite{BickelRitovTsybakov2009} or the \emph{sparse eigenvalue condition} \cite{MeinshausenYu2009,BuneaTsybakovWegkamp2007}, which are used to establish oracle inequalities and recovery guarantees for the Lasso and related estimators.   Part (ii) is from \cite{decoding} and \cite[Lemma 2.1]{CRMATH}.

\begin{proposition}\label{prop.cross} Suppose $A\in\mathbb{R}^{m\times n}$ satisfies the $2k$-RIP \eqref{eq:2k-RIP}.  
\begin{enumerate}
\item[(i)] For every index set $T$ with $|T| \le 2k$, the restricted matrix $A_T$ satisfies the lower singular value bound:
\begin{equation}\label{eq:sigma}
\sigma_{\min}(A_T) \ge \sqrt{1-\delta_{2k}} > 0. 
\end{equation}
\item[(ii)] Let \(\bu,\bv\in\mathbb{R}^n\) have disjoint supports such that $|\Supp(\bu)| + | \Supp(\bv)| \le 2k$. Then:
\[ |\langle A\bu,A\bv\rangle|\le \delta_{2k}\,\|\bu\|_2\|\bv\|_2. \]
\end{enumerate}
\end{proposition}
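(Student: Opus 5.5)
For part (i), I will use the definition of the singular value directly. Let $|T|\le 2k$ and take any $\bv\in\mathbb{R}^{|T|}$. Extend $\bv$ by zeros to a vector $\bu\in\mathbb{R}^n$ supported in $T$. Then $\bu$ is $2k$-sparse and $A_T\bv = A\bu$. The lower inequality in \eqref{eq:2k-RIP} gives
\[
\|A_T\bv\|_2^2 = \|A\bu\|_2^2 \ge (1-\delta_{2k})\|\bu\|_2^2 = (1-\delta_{2k})\|\bv\|_2^2 .
\]
Taking the infimum over unit vectors $\bv$ yields $\sigma_{\min}(A_T)^2 \ge 1-\delta_{2k}$. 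This quantity is positive because $\delta_{2k}<1$, which proves \eqref{eq:sigma}.

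For part (ii), I will follow the standard polarization argument of Candès. Both sides of the inequality are bilinear in $(\bu,\bv)$ up to absolute value, and the claim is trivial if either vector is zero. So it suffices to treat unit vectors $\|\bu\|_2=\|\bv\|_2=1$ and then rescale. Because the supports are disjoint, we have $\|\bu\pm\bv\|_2^2 = 2$. Moreover, $\bu\pm\bv$ is supported on $\Supp(\bu)\cup\Supp(\bv)$, which has at most $2k$ elements. Applying \eqref{eq:2k-RIP} to $\bu+\bv$ and to $\bu-\bv$ gives
\[
2(1-\delta_{2k}) \le \|A(\bu\pm\bv)\|_2^2 \le 2(1+\delta_{2k}).
\]
The polarization identity then gives
\[
\langle A\bu, A\bv\rangle = \tfrac14\left(\|A(\bu+\bv)\|_2^2 - \|A(\bu-\bv)\|_2^2\right).
\]
The difference inside the parentheses is at most $2(1+\delta_{2k}) - 2(1-\delta_{2k}) = 4\delta_{2k}$ in absolute value. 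Hence $|\langle A\bu,A\bv\rangle|\le\delta_{2k}$ for unit vectors. Rescaling by $\|\bu\|_2\|\bv\|_2$ gives the general statement.

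There is no real obstacle in either part. The only points needing care are two checks. First, the sum support condition $|\Supp(\bu)|+|\Supp(\bv)|\le 2k$ must guarantee that $\bu\pm\bv$ is $2k$-sparse; it does, since disjointness makes the union's size equal to this sum. Second, the normalization step must be legitimate, which is why the degenerate case of a zero vector is handled separately before rescaling.
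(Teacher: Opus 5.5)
Your proof is correct and follows essentially the argument the paper relies on. The paper gives no proof of its own: for (ii) it cites Cand\`es' Lemma 2.1, whose proof is exactly your polarization argument with unit-norm, disjointly supported $\bu,\bv$, and for (i) it points to the restricted/sparse eigenvalue literature, whereas you show (i) follows directly from the lower RIP bound applied to zero-extended vectors.
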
 

\subsection{Stability Analysis under Noisy Measurements}
To formalize effective sparsity in the noisy setting, we model each signal by its top-$k$ dominant entries plus a small tail. In particular, we quantify the tail energy outside the top-$k$ index sets $S_x$ and $S_y$ by the following conditions:
\begin{equation}\label{eq:tails}
\|\bx_{S_x^c}\|_2 \le \varepsilon_x, \qquad \|\by_{S_y^c}\|_2 \le \varepsilon_y.
\end{equation}
We next consider the noisy observation model $A \bx + \bm{\eta} = A \by + \bm{\eta}'$, where $\bm{\eta}, \bm{\eta}' \in \mathbb{R}^m$ represent measurement noise or modeling errors. Defining the effective noise as $\be := \bm{\eta} - \bm{\eta}'$, the discrepancy vector satisfies $A \bh = \be$.

Our goal is to show that if $A$ is sufficiently well-conditioned on all $2k$-column submatrices—formalized via the RIP or a lower singular value bound—then the dominant coordinates of $\bx$ and $\by$ must remain close. This extends the classical uniqueness results of sparse recovery to the regime of approximately sparse vectors \cite{blanchard2011compressed, candes2006robust, donoho2006compressed, candes2008introduction}.

 \Cref{thm:noisy-stability} formalizes this intuition. It demonstrates that as long as $A$ does not collapse any $2k$-sparse vector, any two signals with $k$ effective nonzeros will possess similar values on their primary supports. Crucially, this stability is maintained even if the literal $\ell_0$ norm of the signals significantly exceeds $k$. The resulting error bound is explicitly controlled by the aggregate tail energy $\varepsilon_x + \varepsilon_y$ and the restricted condition number of $A$. This confirms a key trade-off: while effective nonzeros provide robustness to small perturbations, they necessitate a measurement matrix that is stable across the entire effective support union.
 

\begin{theorem}[Stability of the Effective Nonzeros  under Noise]
\label{thm:noisy-stability}
Let $A \in \mathbb{R}^{m \times n}$ satisfy the $2k$-RIP \eqref{eq:2k-RIP} with constant $\delta_{2k}\in(0,1)$. 
Let $\bx, \by \in \mathbb{R}^n$ be two signals, and define:
\begin{itemize}
    \item $S_x, S_y \subset \{1,\dots,n\}$: the index sets of the $k$ largest-magnitude entries of $\bx$ and $\by$, respectively;
    \item $T := S_x \cup S_y$ (so $|T| \le 2k$) and $\bh := \by - \bx$.
\end{itemize}
Assume the tail conditions
\[
\|\bx_{S_x^c}\|_2 \le \varepsilon_x, \qquad \|\by_{S_y^c}\|_2 \le \varepsilon_y,
\]
and suppose there exists $\be \in \mathbb{R}^m$ such that $A\bh = \be$ (i.e., $A\bx$ and $A\by$ differ by $\be$).
Under the assumptions above, the difference between $\bx$ and $\by$ on the union of
their top-$k$ supports satisfies 
\begin{equation}\label{eq.noisebound1} 
\|\bh_T\|_2
\le \frac{\sqrt{2(n-2k)}\,\delta_{2k}}{(1-\delta_{2k})\sqrt{k}}\,(\varepsilon_x+\varepsilon_y)  + \frac{\sqrt{1+\delta_{2k}}}{{(1-\delta_{2k}) }}\, \|\be\|_2.
\end{equation}
Consequently, the effective nonzeros of $\bx$ and $\by$ satisfy
\begin{equation}\label{eq.noisebound2} 
\|\bx_{S_x} - \by_{S_y}\|_2
\le \big(\frac{\sqrt{2(n-2k)}\,\delta_{2k}}{(1-\delta_{2k})\sqrt{k}}+1\big)\,(\varepsilon_x+\varepsilon_y)   + \frac{\sqrt{1+\delta_{2k}}}{{(1-\delta_{2k})}}\, \|\be\|_2.
\end{equation}
 \end{theorem}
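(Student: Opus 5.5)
The plan is the classical ``cross-term'' argument used for stable recovery under RIP. The only change is that the vectors are not exactly sparse, so the part of $\bh$ outside $T$ is controlled by the tail energies.

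\textbf{Basic identity.} Start from $A\bh_T = \be - A\bh_{T^c}$ and take the inner product with $A\bh_T$:
\[
\|A\bh_T\|_2^2 = \langle A\bh_T,\be\rangle - \langle A\bh_T, A\bh_{T^c}\rangle .
\]
Three ingredients then bound this identity.
\begin{itemize}
\item \emph{Left-hand side.} Since $|T|\le 2k$, \Cref{prop.cross}(i) gives $\|A\bh_T\|_2^2 \ge (1-\delta_{2k})\|\bh_T\|_2^2$.
\item \emph{Noise term.} Cauchy--Schwarz and the upper RIP bound give $|\langle A\bh_T,\be\rangle| \le \sqrt{1+\delta_{2k}}\,\|\bh_T\|_2\|\be\|_2$.
\item \emph{Tail vector.} Since $T^c\subset S_x^c$ and $T^c\subset S_y^c$, we get $\|\bh_{T^c}\|_2 \le \|\bx_{T^c}\|_2+\|\by_{T^c}\|_2 \le \varepsilon_x+\varepsilon_y$.
\end{itemize}

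\textbf{Cross term.} This is the main obstacle: \Cref{prop.cross}(ii) only applies when the two disjoint supports together have at most $2k$ elements, whereas $\bh_T$ alone may already use $2k$.
\begin{itemize}
\item Split $T = S_x \cup (S_y\setminus S_x)$ into two disjoint pieces $T_a,T_b$, each of size at most $k$.
\item Partition $T^c$ into consecutive blocks $T_1,\dots,T_J$, each of size at most $k$, with $J \approx (n-2k)/k$. If $k$ does not divide $n-2k$, one can pad the last block or absorb the ceiling into the constant; the stated bound corresponds to $J\le (n-2k)/k$.
\item Each pair $(T_a,T_j)$ or $(T_b,T_j)$ has total size at most $2k$, so \Cref{prop.cross}(ii) gives
\[
|\langle A\bh_T, A\bh_{T_j}\rangle| \le \delta_{2k}\big(\|\bh_{T_a}\|_2+\|\bh_{T_b}\|_2\big)\|\bh_{T_j}\|_2 \le \sqrt2\,\delta_{2k}\|\bh_T\|_2\|\bh_{T_j}\|_2 .
\]
\item Sum over $j$ and apply Cauchy--Schwarz across blocks: $\sum_j\|\bh_{T_j}\|_2 \le \sqrt{J}\,\|\bh_{T^c}\|_2 \le \sqrt{(n-2k)/k}\,(\varepsilon_x+\varepsilon_y)$.
\end{itemize}
This produces exactly the factor $\sqrt{2(n-2k)}\,\delta_{2k}/\sqrt{k}$ in \eqref{eq.noisebound1}.

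\textbf{First bound.} Combining the three pieces gives
\[
(1-\delta_{2k})\|\bh_T\|_2^2 \le \|\bh_T\|_2\Big(\sqrt{2(n-2k)/k}\;\delta_{2k}(\varepsilon_x+\varepsilon_y)+\sqrt{1+\delta_{2k}}\,\|\be\|_2\Big).
\]
If $\bh_T=0$ there is nothing to prove. Otherwise, divide by $(1-\delta_{2k})\|\bh_T\|_2$ to obtain \eqref{eq.noisebound1}.

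\textbf{Second bound.} Write
\[
\by_{S_y}-\bx_{S_x} = \bh_T - \by_{T\setminus S_y} + \bx_{T\setminus S_x},
\]
which holds because on $T$ we have $\by_T - \bx_T = \bh_T$, with $\by_T = \by_{S_y}+\by_{T\setminus S_y}$ and similarly for $\bx$. The last two vectors are supported in $S_y^c$ and $S_x^c$, so their norms are at most $\varepsilon_y$ and $\varepsilon_x$. The triangle inequality together with \eqref{eq.noisebound1} then yields \eqref{eq.noisebound2}, adding $1$ to the coefficient of $(\varepsilon_x+\varepsilon_y)$.
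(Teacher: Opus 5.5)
Your outline follows the paper's proof step for step: the same identity, the same split $T=S_x\cup(S_y\setminus S_x)$, the same blockwise use of Proposition~\ref{prop.cross}(ii), and the same triangle-inequality finish. The one difference is the block sum. You use Cauchy--Schwarz over blocks together with $\|\bh_{T^c}\|_2\le\varepsilon_x+\varepsilon_y$. The paper instead sorts the blocks and passes through $\|\bh_{T^c}\|_1\le\sqrt{n-2k}\,(\varepsilon_x+\varepsilon_y)$. Your $\ell_2$ route is more direct, but it contains the gap, which is the count $J\le (n-2k)/k$.

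Any partition of $T^c$ into sets of size at most $k$ has $J\ge\lceil (n-|T|)/k\rceil$. This is strictly larger than $(n-2k)/k$ unless $|T|=2k$ and $k\mid n$. For instance, $S_x=S_y$ forces $J\ge (n-2k)/k+1$, and you never address the case $|T|<2k$. Padding the last block cannot lower $J$, and absorbing the ceiling proves an inequality weaker than \eqref{eq.noisebound1}. So as written, your argument gives the stated constant only in that special case. The paper's route has the same soft spots. The sorted-tail inequality controls only $\sum_{j\ge2}\|\bh_{B_j}\|_2$, so the first block is unaccounted for. Also, $\sqrt{|T^c|}\le\sqrt{n-2k}$ again requires $|T|=2k$.

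Two changes repair your argument.

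\emph{Enlarge $T$.} Replace $T$ by any $T'\supseteq T$ with $|T'|=2k$. Since $T'^c\subseteq S_x^c\cap S_y^c$, you still have $\|\bh_{T'^c}\|_2\le\varepsilon_x+\varepsilon_y$. The set $T'=S_x\cup(T'\setminus S_x)$ still splits into two $k$-sets, and $\|\bh_T\|_2\le\|\bh_{T'}\|_2$.

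\emph{Use a uniform cover.} Replace the disjoint partition by the following cover. List the $N:=n-2k$ indices of $T'^c$ cyclically, and let $W_1,\dots,W_N$ be the windows of $k$ cyclically consecutive indices. If $N\ge k$, every index lies in exactly $k$ windows. Hence $\bh_{T'^c}=\frac1k\sum_s\bh_{W_s}$ and $\sum_s\|\bh_{W_s}\|_2^2=k\|\bh_{T'^c}\|_2^2$, so
\[
|\langle A\bh_{T'},A\bh_{T'^c}\rangle|
\le\frac{\sqrt2\,\delta_{2k}}{k}\,\|\bh_{T'}\|_2\sum_{s=1}^{N}\|\bh_{W_s}\|_2
\le\sqrt{\frac{2(n-2k)}{k}}\;\delta_{2k}\,\|\bh_{T'}\|_2\,\|\bh_{T'^c}\|_2 .
\]
This is exactly the factor you need, with no ceiling, and the rest of your argument then goes through verbatim.

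For $n=2k$ the cross term vanishes and the bound is immediate. When $0<n-2k<k$, no cover by sets of size at most $k$ beats the single block $T'^c$. That block gives only $\sqrt2\,\delta_{2k}$ rather than $\sqrt{2(n-2k)/k}\,\delta_{2k}$. So that corner is not reached by this type of argument, and the paper's proof does not reach it either.
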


\begin{proof} Starting from $A \bh = A(\by-\bx) = \be$ and $A \bh_T = - A \bh_{T^c} + \be.$ 
 We further decompose $\bh_T$ as
\[
\bh_T = \bu + \bv,
\qquad
\bu := \bh_{S_x}, \quad \bv := \bh_{S_y\setminus S_x}.
\]
Observe that \(\bu\) and \(\bv\) are at most \(k\)-sparse and have disjoint supports. It follows that \(\|\bh_T\|_2^2 = \|\bu\|_2^2 + \|\bv\|_2^2\).
Next, partition the tail $\bh_{T^c}$ into blocks $B_1, B_2, \dots$ of size at most $k$, sorted in decreasing order of magnitude. By standard sorted-tail inequalities, for any $j \ge 2$, 
\[\|\bh_{B_j}\|_2 \le \frac{\|\bh_{B_{j-1}}\|_1}{\sqrt{k}}.\]
Summing up both sides yields 
\begin{equation}\label{eq.tail} 
\sum_{j\ge 1} \|\bh_{B_j}\|_2 \le \frac{\|\bh_{T^c}\|_1}{\sqrt{k}}.
\end{equation}

Fix any block \(B_j\subseteq T^c\) with \(|B_j|\le k\). Since each of
\(\Supp(\bu)\) and \(\Supp(\bv)\) has size \(\le k\)
and is disjoint from \(B_j\), the standard RIP-based cross-term bound   \Cref{prop.cross}(ii) 
applies to the pairs \((\bu,\bh_{B_j})\) and \((\bv,\bh_{B_j})\):
\[
|\langle A \bu,\, A \bh_{B_j}\rangle| \le \delta_{2k}\,\|\bu\|_2 \,\|\bh_{B_j}\|_2,
\qquad
|\langle A \bv,\, A \bh_{B_j}\rangle| \le \delta_{2k}\,\|\bv\|_2 \,\|\bh_{B_j}\|_2.
\]
Summing yields
\[
|\langle A \bh_T,\, A \bh_{B_j}\rangle|
= |\langle A(\bu+\bv),\, A \bh_{B_j}\rangle|
\le \delta_{2k}\,(\|\bu\|_2+\|\bv\|_2)\,\|\bh_{B_j}\|_2.
\]
Using \(\|\bu\|_2+\|\bv\|_2 \le \sqrt{2}\,\sqrt{\|\bu\|_2^2+\|\bv\|_2^2} = \sqrt{2}\,\|\bh_T\|_2\),
we obtain the   bound
\begin{equation}\label{eq:cross_bound_final}
|\langle A \bh_T,\, A \bh_{B_j}\rangle|
\le \sqrt{2}\,\delta_{2k}\,\|\bh_T\|_2\,\|\bh_{B_j}\|_2.
\end{equation}

 On the other hand, from \(A \bh = \be\) (i.e.\ \(A \bh_T + A \bh_{T^c}= \be\)),  we have
\[
A \bh_T = -\sum_{j\ge 1} A \bh_{B_j} +  \be.
\]
Take the squared norm and apply the RIP lower bound on \(\bh_T\) (note
\(|\Supp(\bh_T)|\le 2k\)):
\begin{align}
\label{eq.hBj}
(1-\delta_{2k})\|\bh_T\|_2^2
&\le \|A \bh_T\|_2^2 \notag\\
&= \Big\langle A \bh_T,\;
-\sum_{j\ge 1} A \bh_{B_j} + \be\Big\rangle \notag\\
&=
-\sum_{j\ge 1}
\langle A \bh_T,\, A \bh_{B_j}\rangle
+ \langle A\bh_T,  \be\rangle .
\end{align} 
It follows from  \eqref{eq:cross_bound_final} and the triangle inequality that 
\begin{align*}
(1-\delta_{2k})\|\bh_T\|_2^2
&\le
\sum_{j\ge 1}
|\langle A \bh_T,\, A \bh_{B_j}\rangle|
+ | \langle A\bh_T,  \be\rangle|\\
&\le
\sqrt{2}\,\delta_{2k}\,\|\bh_T\|_2
\sum_{j\ge 1} \|\bh_{B_j}\|_2
+ | \langle A\bh_T,  \be\rangle| .
\end{align*}
Dividing both sides by \((1-\delta_{2k})\|\bh_T\|_2\) (the case \(\bh_T=0\) is trivial), 
we obtain from \eqref{eq.tail} 
\begin{align}
\label{eq.tail2}
\|\bh_T\|_2^2
&\le
\frac{\sqrt{2}\,\delta_{2k}}{1-\delta_{2k}}
\sum_{j\ge 1} \|\bh_{B_j}\|_2
+
\frac{| \langle A\bh_T,  \be\rangle|}
{(1-\delta_{2k})\|\bh_T\|_2} \notag\\
&\le
\frac{\sqrt{2}\,\delta_{2k}}{1-\delta_{2k}}
\frac{\|\bh_{T^c}\|_1}{\sqrt{k}}
+
\frac{| \langle A\bh_T,  \be\rangle|}
{(1-\delta_{2k})\|\bh_T\|_2}.
\end{align} 
 By Cauchy-Schwarz inequality and  the lower RIP bound, we have 
\begin{align*} 
\|\bh_{T^c}\|_1
&\le \|\bx_{T^c}\|_1 + \|\by_{T^c}\|_1  \\
&\le \sqrt{|T^c|}
\bigl(\|\bx_{T^c}\|_2+\|\by_{T^c}\|_2\bigr)
\le \sqrt{n-2k}\,(\varepsilon_x+\varepsilon_y),\\
|\langle A \bh_T, \be \rangle|
&\le \|A \bh_T\|_2 \|\be\|_2
\le \sqrt{1+\delta_{2k}}\,
\|\bh_T\|_2\|\be\|_2 .
\end{align*}
 Plugging in \eqref{eq.tail2},  we obtain \eqref{eq.noisebound1}.  
 
Finally, to compare the effective  $k$ nonzeros, we write:
\[
\bx_{S_x} - \by_{S_y}
=
(\bx-\by)_T
+
\bx_{S_y \setminus S_x}
-
\by_{S_x \setminus S_y},
\]
Note that $S_y \setminus S_x \subset S_x^c$ and $S_x \setminus S_y \subset S_y^c$. Therefore, the additional terms are bounded by the tail norms $\varepsilon_x$ and $\varepsilon_y$ respectively. Applying the triangle inequality and \eqref{eq.noisebound1} completes the proof.
\end{proof}

 \Cref{thm:noisy-stability} provides a formal justification for the use of effective nonzeros in sparse recovery.  Classical RIP requirements typically scale with the apparent sparsity. Attempting to protect every infinitesimal nonzero component forces $\delta_t$ conditions for $t \gg 2k$, which are harder to satisfy and require significantly more measurements.  \Cref{thm:noisy-stability} demonstrates that as long as the matrix is well-conditioned on the $2k$-dominant subspace, stable recovery is guaranteed despite the presence of pervasive ``speckle'' noise or numerical artifacts. This ensures that theoretical conditions and sampling resources are dedicated solely to the aspects of the solution that truly matter for interpretation and computation.

  \section{ENZ-Based Recovery and Regularization Formulations}
\label{sec:entropy-model}

The theoretical guarantee  \Cref{thm:noisy-stability} established in  \Cref{sec:stability} demonstrates that stable recovery under the RIP hinges crucially on controlling the aggregate energy of the uninformative tail coefficients, $\varepsilon_x$ and $\varepsilon_y$ \cite{CRMATH, candes2006stable}. In practical inverse problems where the true underlying support is unknown, seeking a strictly $k$-sparse solution via combinatorial optimization forces the numerical solver to aggressively safeguard negligible fluctuations, which leads to overly pessimistic sampling requirements \cite{natarajan1995sparse, lopes2013estimating}. To operationalize the stability insight of \Cref{thm:noisy-stability}, one must transition from binary cardinality counting to a variational framework that systematically discounts these diffuse, long-tail perturbations according to their continuous energy distribution \cite{hurley2009comparing}. This provides the foundational motivation for formulating an unconstrained regularization paradigm governed by the effective number of nonzeros (ENZ).

Ideally, an unconstrained sparse recovery problem that directly minimizes the effective dimensionality of the signal can be formulated as:
\begin{equation}\label{prob.main}
\min_{\bx \in \mathbb{R}^n} \ \mathrm{ENZ}(\bx)
\quad \text{s.t.} \quad
\| {A}\bx - \bb\|_2^2 \le \varepsilon,
\end{equation}
where $\varepsilon > 0$ represents a prescribed noise tolerance or an estimated upper bound on the residual energy of the measurements \cite{donoho2006compressed}. However, the highly nonconvex and fractional nature of $\mathrm{ENZ}(\bx)$ makes problem \eqref{prob.main} optimizationally intractable in its raw form. Because the exponentiation mapping is strictly monotonic, minimizing $\mathrm{ENZ}(\bx)$ on the simplex is mathematically equivalent to minimizing its information-theoretic core, the Shannon entropy $\mathcal{H}(\bx)$ \cite{shannon1948mathematical}. By casting the data fidelity constraint into a penalized objective, we establish the core ENZ-based entropy regularization problem:
\begin{equation}\label{prob.main2}
\min_{\bx \in \mathbb{R}^n} \ \Phi(\bx) := \frac{1}{2}\| {A}\bx - \bb\|_2^2 + \lambda \mathcal{H}(\bx),
\end{equation}
where $\lambda > 0$ is a regularization parameter that explicitly balances spatial data fidelity with the informational sparsity of the recovered coefficient vector \cite{jaynes1957information}.

\subsection{Computational Motivation for Unnormalized Shannon Entropy}

Although the regularization problem \eqref{prob.main2} directly mirrors the geometry of the ENZ model, the explicit dependence of $\mathcal{H}(\bx)$ on the global $\ell_1$-normalization introduces severe analytical and numerical hurdles that impede large-scale computation. Specifically, the probability coordinate mapping $\pi_i(\bx) = |x_i| / \|\bx\|_1$ creates an all-to-all coupling among the components of $\bx$ \cite{roy2007effective}. This global coupling destroys the separability of the regularizer, preventing the derivation of coordinate-wise updates or simple proximal operators, and rendering standard proximal-splitting algorithms computationally prohibitive \cite{beck2009fast}.

To circumvent this structural bottleneck while preserving the scale-awareness necessary for effective sparsity, we introduce an unnormalized Shannon entropy surrogate that completely decouples the coordinates. Let the signal be bounded within a localized envelope such that $\|\bx\|_\infty \le C$ for a positive scaling constant $C$. By replacing the coupled distribution $\pi_i(\bx)$ with the independent normalized magnitudes $|x_i|/C \in [0,1]$, we define the unnormalized Shannon entropy $\mathcal{H}_u(\bx; C)$ as:
\begin{equation}
    \mathcal{H}_u(\bx; C)
    := -\sum_{i=1}^n \frac{|x_i|}{C} \log_2 \frac{|x_i|}{C} + \frac{\|\bx\|_1}{C}
    = \frac{1}{C}\Big( -\sum_{i=1}^n |x_i| \log_2 |x_i| + (1+\log_2 C)\|\bx\|_1 \Big).
    \label{eq:unnormalized_entropy}
\end{equation}
The analytical connection between the standard normalized entropy and this unnormalized surrogate is established via the multiplicative relation:
\begin{equation}
    \mathcal{H}_u(\bx; C) = \frac{\|\bx\|_1}{C} \left( \mathcal{H}(\bx) + 1 + \log_2 \frac{C}{\|\bx\|_1} \right).
\end{equation}
Crucially, in the limiting scenario where the scaling bound tightly matches the energy scale of the iterate, i.e., $C = \|\bx\|_1$, the relation collapses to a linear shift: $\mathcal{H}_u(\bx; C) = \mathcal{H}(\bx) + 1$. Consequently, minimizing the separable formulation $\mathcal{H}_u(\bx; C)$ serves as a mathematically consistent proxy for minimizing the original coupled problem \eqref{prob.main2}. The component function $h(z) = -z \log_2 z + z$ is smooth and strictly concave on $\mathbb{R}_{++}$, meaning that $\mathcal{H}_u$ inherits positive homogeneity of degree one, $\mathcal{H}_u(c\bx) = c\mathcal{H}_u(\bx)$ for any $c>0$, which stabilizes the optimization landscape against arbitrary amplitude scaling.

\subsection{Algorithmic Implementation}
\label{sec:algorithmic-implementation}

Equipped with the separable unnormalized Shannon surrogate \eqref{eq:unnormalized_entropy}, we now develop an efficient, large-scale numerical algorithm to solve the unconstrained penalized formulation:
\begin{equation}\label{eq:unconstrained_objective}
    \min_{\bx \in \mathbb{R}^n} \ F(\bx) := \frac{1}{2}\|A\bx - \bb\|_2^2 + \lambda \mathcal{H}_u(\bx; C),
\end{equation}
where for computational efficiency and gradient consistency, the log-base is shifted to the natural logarithm $\ln$, effectively absorbing a constant scaling factor into $\lambda$. The optimization of \eqref{eq:unconstrained_objective} presents two clear numerical challenges: the regularizer is non-smooth due to the absolute value operator $|x_i|$, and the choice of the scaling parameter $C$ must adaptively track the moving energy scale of the solution trajectory without re-coupling the coordinates during inner optimization steps.

To resolve these issues, we propose a double-loop numerical framework. In the outer loop, we deploy an iterative re-scaling fixed-point scheme where the scaling parameter is frozen to the $\ell_2$ norm of the previous iterate, setting $C^{(t)} = \|\bx^{(t)}\|_2$ at outer iteration $t$. This keeps the regularizer strictly decoupled during the inner optimization phase. Concurrently, to handle the non-smoothness at the origin, we construct a smooth approximation by replacing $|x_i|$ with $\phi_\epsilon(x_i) = \sqrt{x_i^2 + \epsilon}$, where $\epsilon > 0$ is a temporary smoothing parameter managed by a decreasing homotopy continuation schedule.

For a given scale parameter $C > 0$ and a smoothing parameter $\epsilon > 0$, the smoothed unnormalized entropic objective function is defined generally as:
\begin{equation}
    F_{\epsilon}(\bx; C) = \frac{1}{2}\|A\bx - \bb\|_2^2 - \frac{\lambda}{C} \sum_{i=1}^n \sqrt{x_i^2+\epsilon} \ln\left(\frac{\sqrt{x_i^2+\epsilon}}{C}\right) + \frac{\lambda}{C} \sum_{i=1}^n \sqrt{x_i^2+\epsilon}.
\end{equation}
Because $C$ and $\epsilon$ are treated as static constants within the inner optimization phase, $F_{\epsilon}(\bx; C)$ is continuously differentiable, allowing its exact gradient to be evaluated analytically. We minimize this well-conditioned subproblem using the limited-memory Broyden-Fletcher-Goldfarb-Shanno (L-BFGS) algorithm \cite{liu1989limited,nocedal2006numerical}, which achieves linear memory complexity by utilizing historical gradient updates to approximate curvature steps.

To eliminate the smoothing bias and ensure scale alignment, our overall framework adopts a double-loop continuation strategy. At each outer iteration $t$, the inner loop minimizes $F_{\epsilon}(\bx; C)$ with the parameters frozen at the current outer states $C = C^{(t)}$ and $\epsilon = \epsilon^{(t)}$. Upon inner convergence, the outer loop executes a fixed-point update on the scale factor via $C^{(t+1)} = \|\bx^{(t+1)}\|_2$ and reduces the smoothing parameter through a geometric decay factor $\gamma \in (0,1)$. The complete computational procedure is summarized in \Cref{alg:enz_lbfgs}.

\begin{algorithm}[H]
\caption{Iterative Re-scaling and Continuation L-BFGS for ENZ Recovery}
\label{alg:enz_lbfgs}
\begin{algorithmic}[1]
\REQUIRE Matrix $A \in \mathbb{R}^{m \times n}$, vector $\bb \in \mathbb{R}^m$, regularizer $\lambda > 0$, decay $\gamma \in (0,1)$, tolerances $\tau_{in}, \tau_{out}$.
\STATE \textbf{Initialize:} $t \leftarrow 0$, $\bx^{(0)} \in \mathbb{R}^n$, $C^{(0)} \leftarrow \|\bx^{(0)}\|_2$, and $\epsilon^{(0)} > 0$.
\WHILE{$\|\bx^{(t)} - \bx^{(t-1)}\|_2 / \|\bx^{(t-1)}\|_2 > \tau_{out}$}
    \STATE Set $C \leftarrow C^{(t)}$, $\epsilon \leftarrow \epsilon^{(t)}$, $\mathbf{z}^{(0)} \leftarrow \bx^{(t)}$, and $k \leftarrow 0$.
    \REPEAT
        \STATE Compute the objective gradient $\nabla F_{\epsilon}(\mathbf{z}^{(k)}; C)$.
        \STATE Compute L-BFGS search direction $\mathbf{d}^{(k)} = -B^{(k)} \nabla F_{\epsilon}(\mathbf{z}^{(k)}; C)$.
        \STATE Determine step size $\alpha_k$ satisfying the Wolfe conditions via line search.
        \STATE $\mathbf{z}^{(k+1)} \leftarrow \mathbf{z}^{(k)} + \alpha_k \mathbf{d}^{(k)}$.
        \STATE $k \leftarrow k + 1$.
    \UNTIL{$\|\nabla F_{\epsilon}(\mathbf{z}^{(k)}; C)\|_2 \le \tau_{in}$}
    \STATE $\bx^{(t+1)} \leftarrow \mathbf{z}^{(k)}$.
    \STATE $C^{(t+1)} \leftarrow \|\bx^{(t+1)}\|_2$.
    \STATE $\epsilon^{(t+1)} \leftarrow \gamma \epsilon^{(t)}$.
    \STATE $t \leftarrow t + 1$.
\ENDWHILE
\ENSURE Reconstructed signal $\hat{\bx} = \bx^{(t)}$.
\end{algorithmic}
\end{algorithm}

 \section{Numerical Experiments}\label{sec:numerical}

In this section, we carry out numerical experiments to evaluate the empirical performance of the proposed ENZ-based entropy regularization framework against standard sparsity-promoting baselines. The experiments comprise sparse signal recovery from underdetermined measurements and gradient-domain image denoising. Unless stated otherwise, the ENZ model is implemented through the unconstrained penalized formulation minimizing the smoothed unnormalized entropic objective function $F_{\epsilon}(\bx; C)$ introduced in the previous section, executed via the continuation L-BFGS scheme detailed in \Cref{alg:enz_lbfgs}.

\subsection{Sparse Signal Recovery}

We first evaluate our framework on compressed sensing problems in the underdetermined regime ($m \ll n$). The measurement matrix $A \in \mathbb{R}^{m \times n}$ is generated from a correlated Gaussian ensemble where each row is sampled independently from $\mathcal{N}(\mathbf{0}, \Sigma)$ with covariance components $\Sigma_{ij} = (1 - r)\delta_{ij} + r$ for $r \in [0,1)$. In our benchmark simulations, we set $m=64$, $n=512$, and $r=0.1$. The ground-truth signal $\bx^\ast \in \mathbb{R}^n$ is constructed as a $k$-sparse vector whose support is selected uniformly at random, with its non-zero entries sampled independently from a standard normal distribution $\mathcal{N}(0,1)$. To evaluate algorithm robustness under significant amplitude heterogeneity, the non-zero coefficients are rescaled to satisfy a prescribed dynamic range $C_r = \log_{10}(\max |\bx^\ast| / \min |\bx^\ast|) = 3$. The observed linear measurements are generated via $\bb = A \bx^\ast + \bm{\varepsilon}$, where $\bm{\varepsilon}$ represents additive Gaussian noise scaled such that $\|\bm{\varepsilon}\|_2 = \eta\|A\bx^\ast\|_2$ for relative noise levels $\eta \in \{0.01, 0.02, 0.03\}$.

We compare the proposed unnormalized entropy regularization framework (setting $C=\|\bx\|_2$) against three classical baselines: the combinatorial $\ell_0$ norm, the convex $\ell_1$ relaxation, and the non-convex log-sum penalty. Each regularizer is solved using its respective standard numerical framework: Iterative Hard Thresholding (IHT) \cite{blumensath2009iterative} for $\ell_0$, the Iterative Shrinkage-Thresholding Algorithm (ISTA) \cite{beck2009fast} for $\ell_1$, and the Iteratively Reweighted $\ell_1$ (IRL1) algorithm \cite{wang2021nonconvex} for the log-sum penalty. Performance is measured using the relative reconstruction error $\mathrm{Err}(\hat{\bx}) = \|\hat{\bx} - \bx^\ast\|_2 / \|\bx^\ast\|_2$. Across $N=100$ independent Monte Carlo trials, a reconstruction is declared successful if $\mathrm{Err}(\hat{\bx}) \le \tau$ with a success threshold $\tau=0.05$, from which we compute the empirical success rate via the sample mean of the corresponding indicators $\mathbb{I}(\mathrm{Err}(\hat{\bx}) \le \tau)$.

\begin{figure}[ht]
    \centering
    \includegraphics[width=0.95\textwidth]{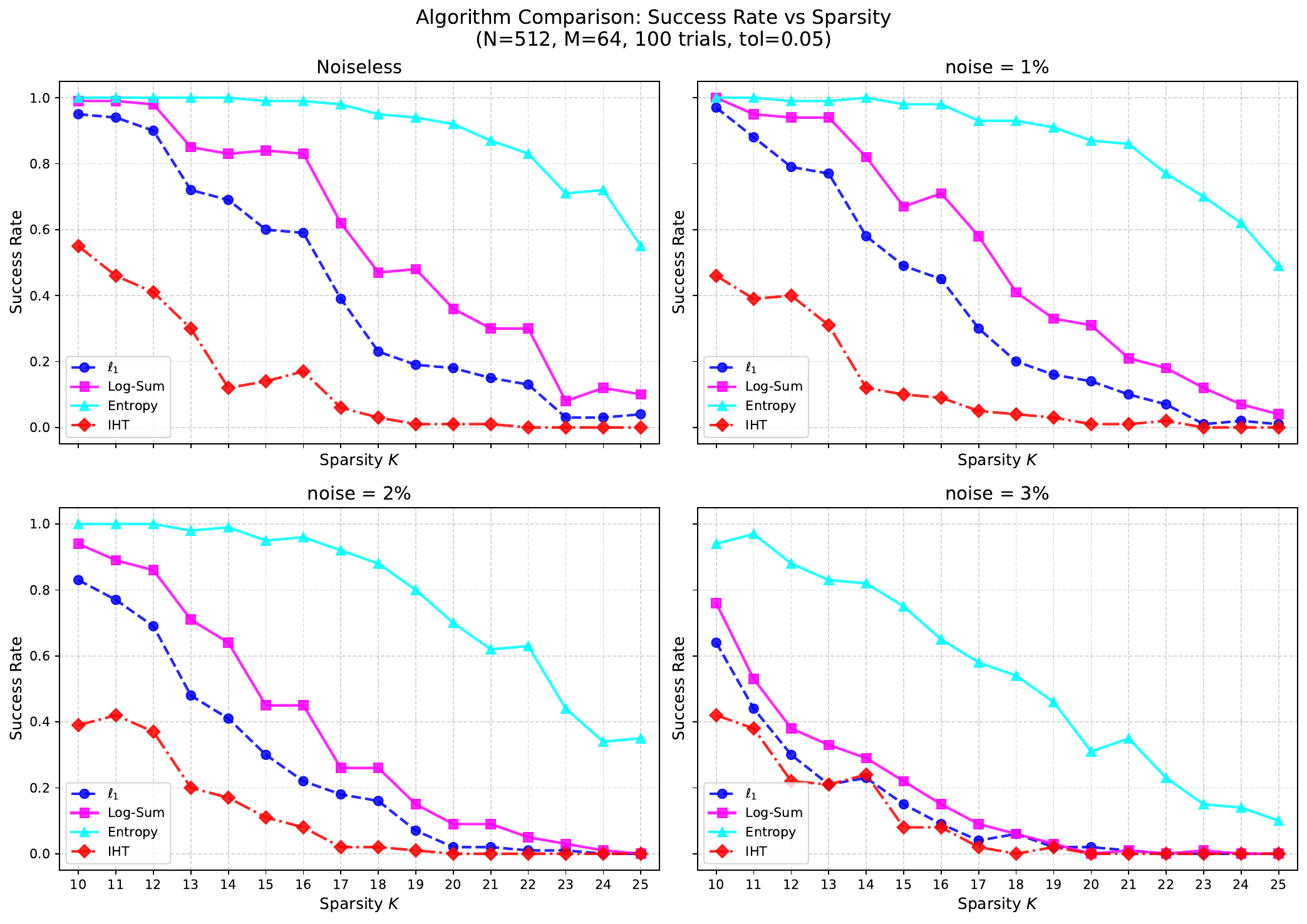}
    \caption{Empirical recovery success rate versus support sparsity level $k$ under correlated Gaussian sensing across different noise levels ($\eta\in\{0.01,0.02,0.03\}$). The proposed entropy regularizer maintains higher success rates across a wider sparsity regime and degrades more gracefully under noise than the $\ell_0$, $\ell_1$, and log-sum baselines.}
    \label{fig:success_rate}
\end{figure}

As summarized in \Cref{fig:success_rate}, the entropic regularization approach yields consistently higher success rates than the $\ell_1$ and log-sum baselines across all tested noise bounds. This confirms that evaluating the continuous energy distribution provides enhanced mathematical resilience when recovering sparse vectors characterized by large dynamic ranges.

\subsection{Image Denoising}

To assess the computational utility of the framework on structured physical systems, we formulate a gradient-domain variational image denoising problem. Let $\by \in \mathbb{R}^N$ denote the vectorized observed noisy image and $\bx \in \mathbb{R}^N$ represent the latent clean image ($N = H \times W$). Exploiting the piecewise-smooth nature of natural scenes described in \Cref{sec.concept}, we construct a discrete gradient operator $\mathbf{D} = [\mathbf{D}_x^\top, \mathbf{D}_y^\top]^\top \in \mathbb{R}^{2N \times N}$ by stacking the horizontal and vertical forward finite difference operators under periodic boundary conditions. The denoising task is cast as finding the optimal clean image minimizing a quadratic data fidelity term paired with our unnormalized entropic surrogate acting on the structural variations:
\begin{equation}
    \hat{\bx} = \arg\min_{\bx} \left( \frac{1}{2} \|\bx - \by\|_2^2 + \lambda F_{\epsilon}(\mathbf{D}\bx; C) \right),
\end{equation}
where $\lambda > 0$ balances noise removal and edge preservation. For our model, we fix the outer scaling constant to $C=1$, since the maximum possible intensity transition between adjacent pixels in a normalized image is exactly unity.

We evaluate the framework on standard test images ($128 \times 128$) corrupted by additive white Gaussian noise with standard deviation $\sigma=0.05$. The proposed approach is compared against standard Total Variation (TV) regularization \cite{rudin1992nonlinear}, which enforces an $\ell_1$ penalty on $\mathbf{D}\bx$, and the non-convex Log-Sum penalty \cite{Candes2008Enhancing}. Reconstruction quality is quantified via Peak Signal-to-Noise Ratio, $\text{PSNR}(\hat{\bx}, \bx) = 10 \log_{10} (\text{MAX}_I^2 / (\frac{1}{N}\|\hat{\bx} - \bx\|_2^2))$ with $\text{MAX}_I=1.0$, and the Structural Similarity Index Measure (SSIM) \cite{wang2004image}, which monitors luminance, contrast, and structural cohesion. To ensure an unbiased comparison, the regularizer hyperparameter $\lambda$ for all methods was optimized via a localized logarithmic grid search over $[10^{-4}, 10^{2}]$ to maximize the baseline PSNR metrics.

\begin{table}[H]
    \centering
    \caption{Quantitative comparison of image denoising algorithms (PSNR / SSIM) under white Gaussian noise ($\sigma=0.05$). The best values are boldfaced; the entropy regularizer consistently maximizes structural preservation across all benchmarks.}
    \label{tab:quantitative_results}
    \setlength{\tabcolsep}{8pt} 
    \renewcommand{\arraystretch}{1.2} 
    \begin{tabular}{lcccccc}
        \toprule
        \multirow{2}{*}{Method} & \multicolumn{2}{c}{Cameraman} & \multicolumn{2}{c}{Synthetic} & \multicolumn{2}{c}{Horse} \\
        \cmidrule(lr){2-3} \cmidrule(lr){4-5} \cmidrule(lr){6-7}
         & PSNR & SSIM & PSNR & SSIM & PSNR & SSIM \\
        \midrule
        TV       & 28.37 & 0.8198 & 32.62 & 0.2585 & 29.05 & 0.8140 \\
        Log-Sum  & 30.93 & 0.8375 & 34.01 & 0.2670 & \textbf{32.22} & 0.8249 \\
        \midrule
        Entropy  & \textbf{31.25} & \textbf{0.8760} & \textbf{34.14} & \textbf{0.2725} & 32.02 & \textbf{0.8295} \\
        \bottomrule
    \end{tabular}
\end{table}

\Cref{tab:quantitative_results} summarizes the quantitative results. The proposed entropy model achieves the highest PSNR on the \textit{Cameraman} and \textit{Synthetic} profiles, and delivers the highest SSIM across all three test sets. On the \textit{Horse} image, our framework attains the highest structural fidelity while maintaining a PSNR highly competitive with the Log-Sum benchmark (32.02 dB vs. 32.22 dB), demonstrating an excellent overall computational trade-off between smoothing and edge reconstruction.

\begin{figure*}[t]
    \centering
    \subfloat[Original]{\includegraphics[width=0.32\textwidth]{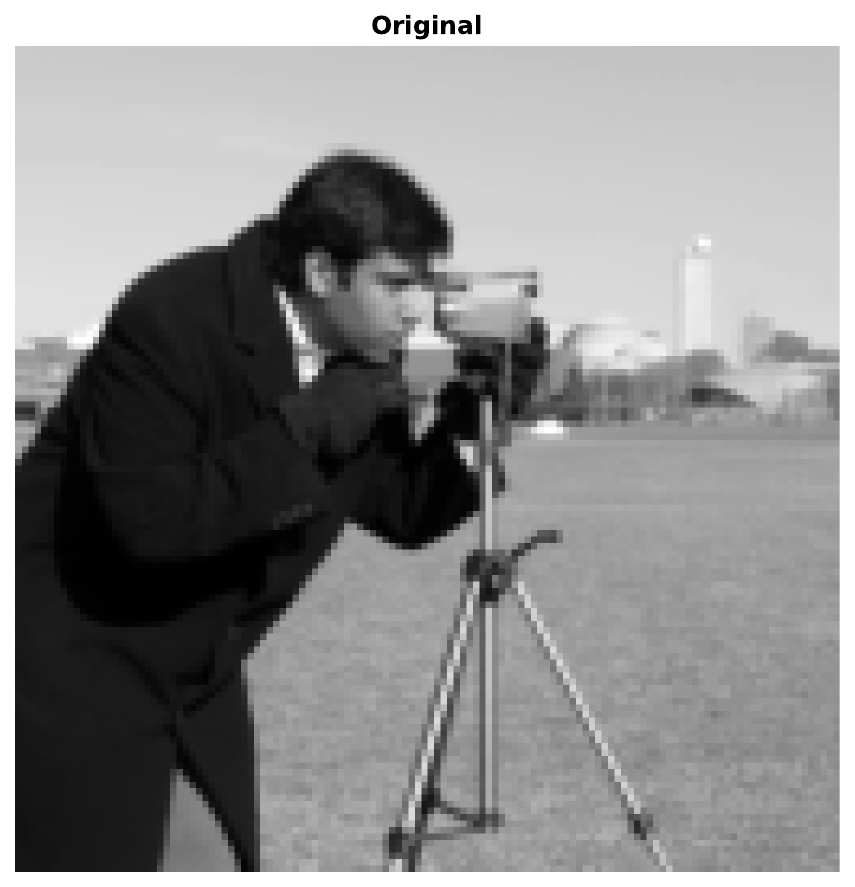}}\hfill
    \subfloat[Noisy]{\includegraphics[width=0.32\textwidth]{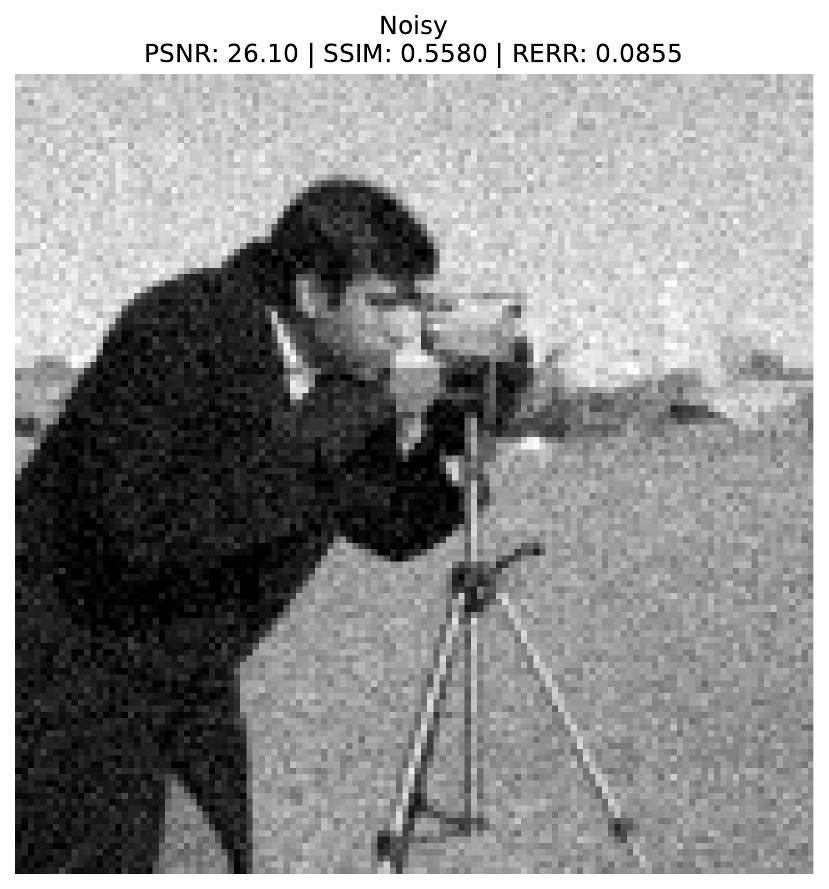}}\hfill
    \subfloat[TV]{\includegraphics[width=0.32\textwidth]{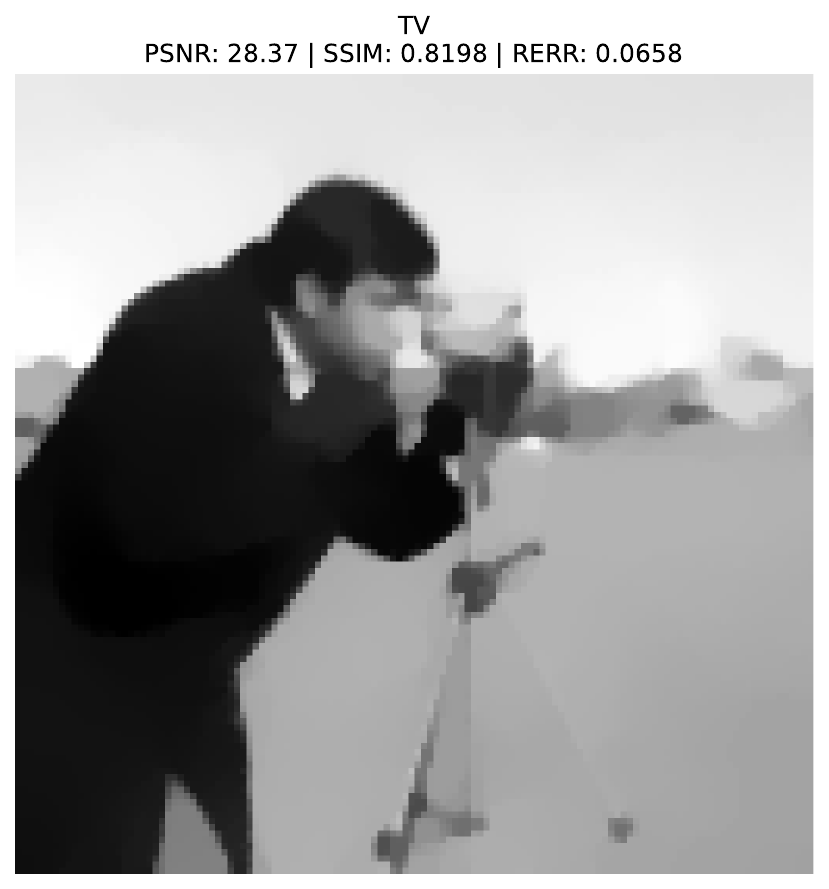}}\\[2pt]
    \makebox[\textwidth][c]{%
    \subfloat[Log-sum]{\includegraphics[width=0.32\textwidth]{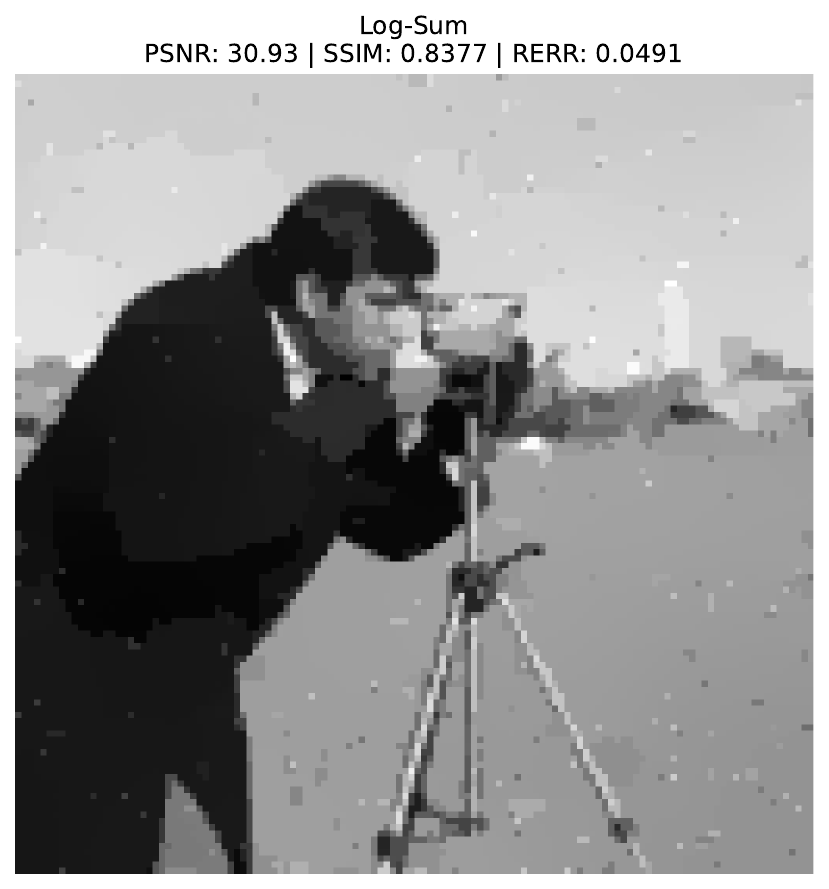}}\hspace{0.04\textwidth}
    \subfloat[Entropy]{\includegraphics[width=0.32\textwidth]{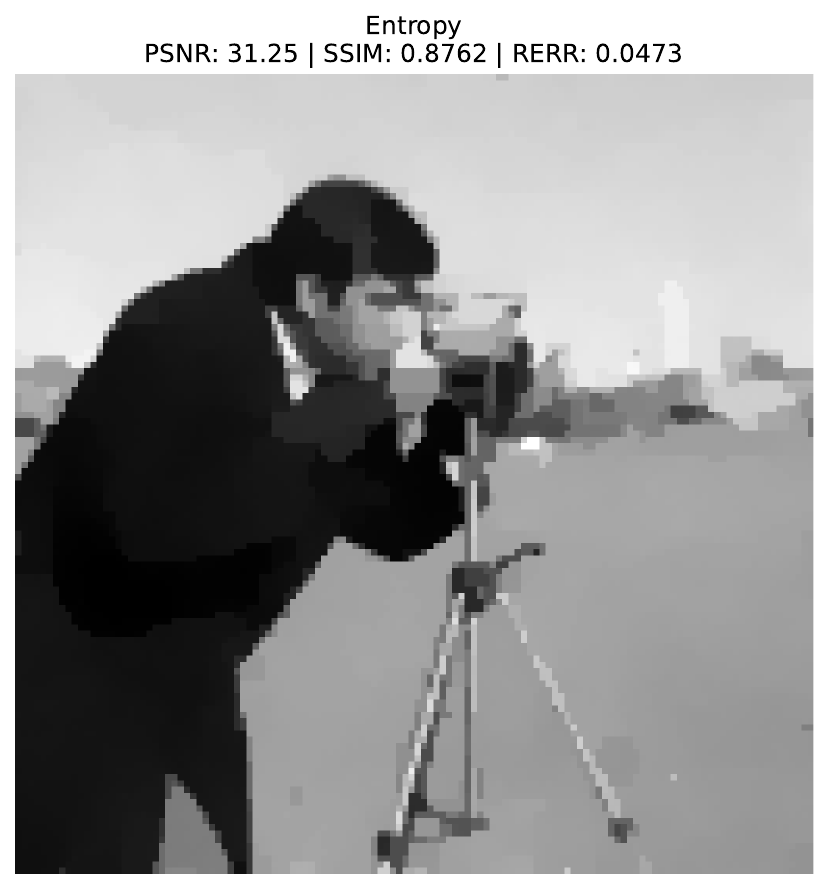}}}
    \caption{Visual denoising results on \textit{Cameraman} under white Gaussian noise ($\sigma=0.05$). Compared with the TV and Log-sum frameworks, the entropy regularizer suppresses residual noise artifacts without generating cartoon-like blurring on fine transitions.}
    \label{fig:cameraman}
\end{figure*}

\section{Conclusion}
\label{sec:conclusion}

In this paper, we present an information-theoretic framework for effective sparsity based on the effective number of nonzeros (ENZ). By moving from rigid binary counting to a continuous, relative magnitude-aware metric, the ENZ framework extends naturally to a parametric R\'enyi hierarchy and admits an exact informational decomposition that systematically discounts negligible long-tail noise. To bypass the computational coupling inherent to standard probability normalization, we introduce coordinate-separable unnormalized entropy surrogates and solve them within a double-loop framework leveraging iterative re-scaling and continuation L-BFGS. Numerical experiments in sparse signal recovery and image denoising demonstrate that this entropic approach improves reconstruction robustness and structural preservation compared to classical convex and non-convex penalties under noisy or correlated regimes.

{\noindent \bf Acknowledgment.}
This work was supported by the General Program of the National Natural Science Foundation of China (NSFC) under No. 12571326.

\bibliographystyle{siamplain}
\bibliography{reference}

\end{document}